\documentclass[10pt,twocolumn,letterpaper]{article}

\usepackage{iccv}
\usepackage{times}
\usepackage{epsfig}
\usepackage{graphicx,booktabs}
\usepackage{amsmath,multirow}
\usepackage{amssymb}

\usepackage{amsmath,amsfonts,bm}









\def\eqref#1{equation~\ref{#1}}









\def\1{\bm{1}}










\DeclareMathAlphabet{\mathsfit}{\encodingdefault}{\sfdefault}{m}{sl}
\SetMathAlphabet{\mathsfit}{bold}{\encodingdefault}{\sfdefault}{bx}{n}













\usepackage{amsthm}
\usepackage{mathtools}

\usepackage{array}
\newcolumntype{x}[1]{>{\centering\arraybackslash\hspace{0pt}}p{#1}}

\newtheorem{definition}{Definition} 
\newtheorem{theorem}{Theorem}

\newcommand{\indep}{\rotatebox[origin=c]{90}{$\models$}}




\iccvfinalcopy 


\ificcvfinal\pagestyle{empty}\fi

\usepackage{authblk}

\usepackage[pagebackref=true,breaklinks=true,letterpaper=true,colorlinks,bookmarks=false]{hyperref}

\begin{document}

\title{Domain Intersection and Domain Difference\vspace{-1cm}}

\author[1]{Sagie Benaim}
\author[1]{Michael Khaitov}
\author[1]{Tomer Galanti}
\author[1,2]{Lior Wolf}
\affil[1]{School of Computer Science, Tel Aviv University} 
\affil[2]{Facebook AI Research}

\maketitle
\thispagestyle{empty}

\begin{abstract}
We present a method for recovering the shared content between two visual domains as well as the content that is unique to each domain. This allows us to map from one domain to the other, in a way in which the content that is specific for the first domain is removed and the content that is specific for the second is imported from any image in the second domain. In addition, our method enables generation of images from the intersection of the two domains as well as their union, despite having no such samples during training. The method is shown analytically to contain all the sufficient and necessary constraints. It also outperforms the literature methods in an extensive set of experiments. 
Our code is available at \url{https://github.com/sagiebenaim/DomainIntersectionDifference}.
\end{abstract}

\section{Introduction}

In unsupervised mapping between visual domains, the algorithm receives two unmatched sets of samples: one from domain $A$ and one from domain $B$. It then learns a mapping function that generates, for each sample $a$ in domain $A$, a matching sample in $B$. 

Without a supervision in the form of pairs of matched samples, the problem, like other unsupervised tasks, can be ambiguous~\cite{galanti2018the}. However, it is natural to expect that a pair of samples $(a,b)$, one from each domain, would be considered matching, if there is a significant amount of shared content between $a$ and $b$. The more content is shared, the stronger the link between the two samples.

Therefore, one can consider the intersection of two visual domains $A$ and $B$ as a domain that contains all of the information that is common to the two domains. This shared domain needs not be visual, and it can contain information that is encoded (latent information).

Turning our attention to the information that complements the shared information, each domain also has a separate, unshared part, which is domain-specific in the context of the two domains.

When mapping a sample $a$ from domain $A$ to $B$, we can, therefore, consider three types of information. The part of $a$ that is in the shared domain needs to remain fixed under the transformation. The part of $a$ that is specific to domain $A$ is discarded. Lastly, the part of the generated sample in $B$ that is specific to this domain is arbitrary.

While many unsupervised domain mapping methods do not specify the component that is specific to the second domain, some of the recent methods rely on a sample in $B$ to donate this information. Such methods are called guided image to image translation methods. The literature has two types of such methods: those that borrow the style from the image in $B$, assuming that the domain specific information is a type of visual style~\cite{munit,Lee_2018_ECCV}, and a recent method~\cite{press2018emerging} which assumes that domain $A$ is a subset of domain $B$, which does not contain any information that is not present in $B$. In both cases, these assumptions seem too strong. 

Our method is able to deal with the two separate domains in a symmetric way, without assuming that domain $B$ can contribute only a different style and without assuming that $A$ is a degenerate subset of $B$. The method employs a set of loss terms that lead, as our analysis shows, to a disentanglement between the three types of information that exist in the two domains. 

As a result, our method enables a level of control that is unprecedented in mapping image across domains. It allows us to take the specific part that belongs to domain $A$ from one image, the specific part of domain $B$ from another image, and the shared part from either image or from a third image. In addition, each of the three parts can be interpolated between different samples, and the domain specific parts can be eliminated altogether.

\subsection{Previous Work}

In image to image translation, the algorithm is provided with two independent datasets from two different domains. The goal is to learn a transformation of samples from the first domain to samples from the second domain. These transformations are often implemented by a deep neural network that has an encoder-decoder architecture. 

The early solutions to this problem assumed the existence of an invertible mapping $y$ from the first domain to the second domain. This mapping takes a sample $a$ in domain $A$ and maps it to an analog sample in domain $B$. The circularity based constraints by~\cite{CycleGAN2017,discogan,dualgan} are based on this assumption. In their work, they learn a mapping from one domain to the other and back in a way that returns the original sample, which requires no loss of information.  Nevertheless, this assumption fails to hold in a wide variety of domains. For example, in~\cite{CycleGAN2017} they show that when learning a mapping from images of zebras to images of horses, the stripes of the zebras are lost, which results in an ambiguity when mapping in the other direction. In our paper, we do not make assumptions of this kind. Instead, we take a very generic formulation that fits a wide variety of domains.

A few publications suggested learning many to many transformations. These papers include the augmentation based extension of CycleGAN~\cite{almahairi2018augmented}. In their generative model, they provide an additional random vector for each domain. Other methods such as the NAM method~\cite{nam} suggested non-adversarial training. In this model, the multiple solutions are obtained by different initializations. In our paper, multiple mappings are obtained by using a guide image. 

A powerful method for capturing the relations between the two domains is done by employing two different autoencoders that share many of their parameters~\cite{cogan,unit}. These constraints provide a shared representation of the two domains. Low-level image properties, such as color, texture and edges are domain-specific and are encoded and decoded separately. The higher level properties are shared between the two domains and are processed by the same layers in the autoencoders. In our paper, we employ a shared encoder for both domains to enforce a shared representation. Each domain has its own separate encoder to encode domain-specific content. Weight sharing is not used.

\noindent{bf Guided Translation}

The most relevant line of work learns a mapping between the two domains that takes two images as inputs: a source image $a$ from the first domain and a guide image $b$ from the second domain~\cite{munit,Lee_2018_ECCV,ma2018exemplar,press2018emerging}. The work of~\cite{munit,Lee_2018_ECCV,ma2018exemplar} employ a very narrow encoding for the domain specific content that is reflected by a low dimensional encoding. This enables them to only encode the style of the image in their domain specific encoder. However, since this encoding is very limited, it is impossible to capture the entire domain specific content. In our method, we do not rely on architectural restrictions to partition the information in the images into domain specific and common parts. Instead, our losses provide sufficient and necessary conditions for dividing the content into domain-specific and common contents in a principled way. Therefore, in our method we are able to capture a disentangled representation in which the common information in its entirety is encoded in the shared encoder and the complete domain-specific information is encoded in the separate encoders.

The very recent work of~\cite{press2018emerging} is probably the most similar to our work. In their paper, they tackle the problem where the source domain is a subset of the target domain (e.g., images of persons to images of persons with glasses). For such domains, a one-sided guided mapping from a source domain to a target domain is learned. For this purpose, they employ a common encoder, a separate encoder for the target domain and one decoder. To map between the source domain and the target domain, one applies the decoder on the common encoding of the source image and the separate encoding of the target domain. In their work, they are able to transfer the domain specific content for guiding the mapping from source to target. However, unlike our work, they are unable to handle the more general case, where both the source and the target domains have their own separate contents. This distinction is important, since even though they are able to provide content based guided mapping, they are limited to the case where the source domain behaves as a subset of the target domain. In our model, we are able to remove the content from the source images that is not present in the target images and not just to add content from images in the target domain.

Also related are several guided methods, which are trained in a supervised manner, i.e., the algorithm is provided with ground truth paired matches of images from domains $A$ and $B$. Unlike the earlier supervised one-to-one mapping methods, such as pix2pix~\cite{pix2pix}, these methods produce multiple outputs based on a guide image from the target domain. Examples include the Bicycle GAN by~\cite{zhu2017toward} and specific applications of the methods of~\cite{bao2017cvae,gonzalez2018image}.

In our method, disentanglement between the shared content and the two sources of domain-specific information emerge. Other work that relies on unsupervised or weakly supervised disentanglement, include the InfoGAN method~\cite{infogan}, which learns to disentangle a distribution to class-information and style, based on the structur of the data. \cite{lample2017fader,naama} learn a disentangled representation, by decreasing the class based information within it. We do not employ such class information. 

\section{Problem Setup}\label{sec:setup}

We consider a framework with two different visual domains $A = (\mathcal{X}_A,\mathbb{P}_A)$ and $B = (\mathcal{X}_B,\mathbb{P}_B)$. Here, $\mathcal{X}_A, \mathcal{X}_B \subset \mathbb{R}^n$ are two sample spaces of visual images and $\mathbb{P}_A, \mathbb{P}_B$ are two distributions over them (resp.), i.e., the probability of $x \sim \mathbb{P}_A$ being $a$ is defined to be $\mathbb{P}_A[x=a]$. 

In this setting, we have two independent training datasets $\mathcal{S}_A = \{a_i\}^{m_1}_{i=1}$ and $\mathcal{S}_B = \{b_j\}^{m_2}_{j=1}$ sampled i.i.d from $\mathbb{P}_A$ and $\mathbb{P}_B$ (resp.). The set $\mathcal{S}_A$ (resp. $\mathcal{S}_B$) consists of training images from domain $A$ (resp. $B$).

Within a generative perspective, we assume that a sample $a \sim \mathbb{P}_A$ is distributed like $g(z_c,z_a,0)$ and a sample $b \sim \mathbb{P}_B$ is distributed like $g(z_c,0,z_b)$, where $z_c \sim \mathbb{P}_c$ and $z_a \sim \mathbb{P}^s_{A}$ and $z_b \sim \mathbb{P}^s_{B}$ are three latent variables. $z_c$ is considered a {\em shared content} between the two domains and $z_a$ and $z_b$ are {\em domain specific}.
The process is subject to the following independency relations. A sample $a$ from $A$ is generated such that, $z_c \indep z_a$ and a sample $b$ from $B$ is generated such that, $z_c \indep z_b$. The function $g$ takes a shared content $z_c \sim \mathbb{P}_c$ and a specific content $z_a \sim \mathbb{P}^s_{A}$ ( $z_b \sim \mathbb{P}^s_{B}$) and returns an image $g(z_c,z_a,0) \sim \mathbb{P}_A$ ( $g(z_c,0,z_b) \sim \mathbb{P}_B$). We assume that $g$ is invertible for both domains, i.e., there are functions $e^c$, $e^s_A$ and $e^s_B$, such that, for any sample $a \in \mathcal{X}_A$ and $b \in \mathcal{X}_B$, we have: 
\begin{equation}
a = g(e^c(a),e^s_A(a),0) \textnormal{ and } b = g(e^c(b),0,e^s_B(b))
\end{equation}
Here, $e^c$ denotes the function that takes a sample $a$ (or $b$) and returns its shared content, $e^s_A$ takes a sample $a$ and returns the specific content of $a$ and $e^s_B$ takes a sample $b$ and returns its specific content. As mentioned above, $e^c(a) \sim e^c(b)$, $e^c(a) \indep e^s_A(a)$ and $e^c(b)\indep e^s_B(b)$. For clarity, we note this is just a matter of modeling and we do not assume knowledge of the distributions of $z_c$, $z_a $ and $z_b$ nor $g$, $e^c$, $e^s_A$ and $e^s_B$. 

As a running example, let $A$ be a domain of images of non-smiling persons with glasses and $B$ a domain of images of smiling persons without glasses. In this case, $\mathcal{X}_A$ is a set of images of persons with glasses, $\mathcal{X}_B$ is a set of images of smiling persons. In addition, $\mathbb{P}_A$ are $\mathbb{P}_B$ are two distributions over these sets (resp.). The set $\mathcal{S}_A$ consists of $m_1$ training images of persons with glasses and $\mathcal{S}_B$ consists of $m_2$ training images of smiling persons. Here, the shared content $z_c$ between the two domains is an encoding of the identity and pose in an image (the image information excluding information about glasses or smile), $z_a$ is an encoding of glasses and $z_b$ is an encoding of a smile. The function $g$ is a generator that takes an encoding $z_c$ of a person and an encoding $z_a$ of glasses (or an encoding $z_b$ of a smile) and returns an image of the specified person with the specified glasses (or an image of the specified person with the specified smile). 

In this paper, we aim to learn an encoder-decoder model $G \circ E(x)$. Our encoder $E$ is composed of three parts: $E(x) := (E^c(x), E^s_A(x), E^s_B(x))$. Our goal is to make the first encoder, $E^c(x)$, capture the shared content between the two domains, $E^s_A(x)$, capture the content specific to images $a$ from $A$ and the third encoder, $E_B^s(x)$, capture the content present only in images $b$ from $B$. In addition, we want to make our generator $G$ be able to take $E^c(a)$ and $E^s_B(b)$ and return an image in $B$ that has the shared content of $a$ and the specific content of $b$ (and similarly in the opposite direction). Both the encoder and decoder are implemented with neural networks of fixed architectures. The specific architectural details are given in the appendix. 

In the example above, for an image $a$ from $A$, we would like $E^c(a)$ to encode the person in the image $a$ (same for $b$ from domain $B$). We also want $E^s_A(a)$ to encode the glasses in the image $a$ and want $E^s_B(b)$ to encode the smile in the image $b$. We want $G$ to take $E^c(a)$ and $E^s_B(b)$ and to return an image of the person in $a$ without her glasses, but with the smile present in $b$. 

Formally, we would like to have the following two properties on the encoder-decoder:
\begin{equation}\label{eq:goal}
\begin{aligned}
& G(E^c(a),0,E^s_B(b)) \approx g(e^c(a),0,e^s_B(b)) \\
\textnormal{and } & G(E^c(b),E^s_A(a),0) \approx g(e^c(b),e^s_A(a),0)
\end{aligned}
\end{equation}
Here, $0$ in the first equation stands for zeroing the coordinates of $E^s_A(x)$ in the encoder $E(x)$ (similarly for the second equation). 

Since we do not have any paired matches of any of the forms: $(a,b) \mapsto g(e^c(a),0,e^s_B(b))$ or $(a,b) \mapsto g(e^c(b),e^s_A(a),0)$ (the left-hand-side is a pair of images and the right-hand-side is a single image) it is unclear how to make the encoder-decoder $G \circ E$ satisfy Eq.~\ref{eq:goal}. Concretely, since we are only provided with unmatched images of persons with glasses and images of smiling persons, it is not obvious how to learn a mapping that takes an image of a person with glasses and an image of a smiling person and returns an image of the first person without the glasses, but with the smile from the second image. We present a set of training constraints that are both necessary and sufficient for performing this training.

\section{Method}\label{sec:method}

In Sec.~\ref{sec:setup} we defined the different components of the proposed framework. In addition, we explained that it is not obvious how to solve Eq.~\ref{eq:goal} without any supervised data. In this section,  we explain our method for solving this problem in the proposed unsupervised setting. 

As mentioned, our method consists of three encoders, $E^{c}$, $E^{s}_{A}$ and $E^{s}_{B}$ and a decoder $G$. $E^{c}$ encodes the information content common to $\mathbb{P}_A$ and $\mathbb{P}_B$. The two other encoders, $E^{s}_{A}$ and $E^{s}_{B}$, encode the information content specific to samples of $\mathbb{P}_A$ and $\mathbb{P}_B$ (resp.). To solve this, we use three types of losses: ``zero'', adversarial, and reconstruction.

\subsection{Zero Loss}

We would like to enforce $E^s_A$ ($E^s_B$) to capture information relevant to domain $A$ only. To do so we force $E^s_A$ ($E^s_B$) to be 0 on samples in $B$ ($A$):
\begin{align}
\mathcal{L}_{zero}^A &:= \frac{1}{m_2} \sum^{m_2}_{j=1} \| E^s_A(b_j) \|_1 \\
\mathcal{L}_{zero}^B &:= \frac{1}{m_1} \sum^{m_1}_{i=1} \| E^s_B(a_i) \|_1 \\
\mathcal{L}_{zero} &:= \mathcal{L}_{zero}^A + \mathcal{L}_{zero}^B
\end{align}

As illustrated in Fig~\ref{fig:diagram}(a), if $A$ is the domain of persons with glasses and $B$ is that of smiling persons, then this loss ensures that $E^s_A$ ($E^s_B$) will not capture any information about the face or smile (face or glasses).

\subsection{Adversarial Loss}

We would like to capture the fact that the common encoder, $E^c$, does not capture more information than necessary. In the running example, we would like $E^c$ not to capture information about smile or glasses. This is illustrated in Fig~\ref{fig:diagram}(c). To do so, we use an adversarial loss to ensure that the distribution $\mathbb{P}_{E^c(A)}$ of $E^c(a)$ equals the distribution $\mathbb{P}_{E^c(B)}$ of $E^c(b)$. The loss $\mathcal{L}_{adv}$ is given by:
\begin{small}
\begin{align}
 \frac{1}{m_1} \sum^{m_1}_{i=1} l(d(E^c(a_i)),1)+ \frac{1}{m_2}\sum^{m_2}_{j=1} l(d(E^c(b_j)),1)
\end{align}
\end{small}
$d$ is a discriminator network, and  $l(p,q) = -(q\log(p)+(1-q)\log(1-p))$ is the binary cross entropy loss for $p\in\left[0,1\right]$ and $q\in\{0, 1\}$.
The network $d$ minimizes the loss:
\begin{small}
\begin{align}
    \mathcal{L}_d &:= \frac{1}{m_1} \sum^{m_1}_{i=1} l(d(E^c(a_i)),0)+ \frac{1}{m_2}\sum^{m_2}_{j=1} l(d(E^c(b_j)),1)
\end{align}
\end{small}
The discriminator $d$ attempts to separate between the distributions $\mathbb{P}_{E^c(A)}$ and $\mathbb{P}_{E^c(B)}$ of $E^c(a)$ and $E^c(b)$ (resp.), by classifying samples of the former as $0$ and the samples of the latter as $1$, whereas the encoder tries to fool the discriminator, hence forcing both distributions to match.

Referring back to our running example, this loss is a confusion term that ensures that the encoding by $E^c$ of face images do not contain information on whether the person is smiling and on whether the person wears glasses.

\subsection{Reconstruction Loss}

Both the zero loss and the adversarial loss ensure that no encoder encodes more information than needed. However, we need to also ensure that all the needed information is encoded. In particular, $E^s_A$ ($E^s_B$) should capture all the separate information in $A$ ($B$). $E^c$ should capture all the common information between $A$  and $B$, but not less. To do so, we force the information in $E^s_A(a)$ and $E^c(a)$ to be sufficient to reconstruct $a$, and similarly that the information in $E^s_B(b)$ and $E^c(b)$ is sufficient to reconstruct $b$. Specifically, we have:
\begin{align}
\mathcal{L}_{recon}^A &:= \frac{1}{m_1} \sum^{m_1}_{i=1} \| G(E^c(a_i),E^s_A(a_i),0) - a_i \|_1 \\
\mathcal{L}_{recon}^B &:= \frac{1}{m_2} \sum^{m_2}_{j=1} \| G(E^c(b_i),0,E^s_B(b_j)) - b_j \|_1  \\
\mathcal{L}_{recon} &:= \mathcal{L}_{recon}^A + \mathcal{L}_{recon}^B 
\end{align}

\subsection{Full Objective}

For the full objective, $E_c$, $E^s_A$, $E^s_B$ and $G$ jointly minimize the following objective:
\begin{align}
\mathcal{L} =  \mathcal{L}_{zero} + \lambda_1\mathcal{L}_{adv} + \lambda_2\mathcal{L}_{recon}
\end{align}
Where $\lambda_1$  and $\lambda_2$ are positive constants. The discriminator $d$ minimizes the loss $\mathcal{L}_d$ concurrently. The full description of the architecture employed for the encoders, generator and discriminator is given in the appendix.

\begin{figure*}[t]
    \centering
    \includegraphics[scale=0.25]{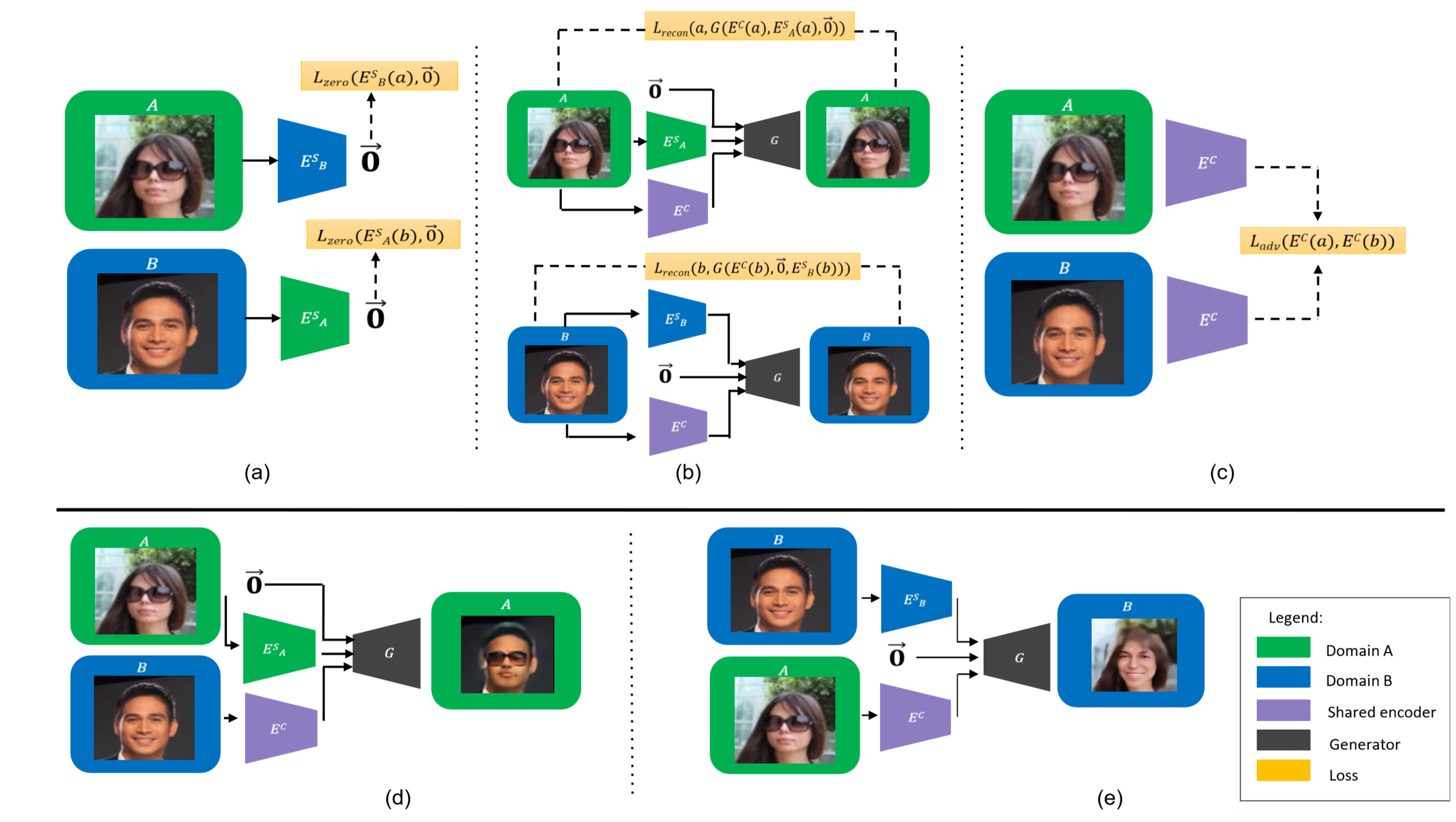}
    \caption{Illustration of the train and inference stages. The losses are illustrated in (a), (b) and (c) and the guided mappings are illustrated in (d) and (e). (a) Illustration of the zero loss. Encoding images from domain $A$ (illustrated in green) with domain's $B$ separate encoder should result in a zero vector, encoding no information about the image (and vice versa). (b) Illustration of the reconstruction loss. Given $a$'s separate encoding (illustrated in green), for example glasses, and its common encoding (illustrated in purple), for example all other facial features, it should be possible to reconstruct $a$ (same for domain $B$). (c) Illustration of the adversarial loss. The distribution of the common encoding from domain $A$ and domain $B$ (face features) should be the same. To enforce this, an adversarial loss is used. (d) Constructing new images. At inference time we can encode domain's $B$ image $b$ using its separate encoder to get its smile, encode the common domain $A$'s image $a$ (face features without glasses) and generate an image similar to $a$, but without glasses and with $b$'s smile. (e) Similarly to (d), we can generate an image similar to $a$ but with the smile removed and glasses of $b$ added.}
    \label{fig:diagram}
\end{figure*}

\section{Theoretical Analysis}\label{sec:analysis}

We provide an informal theoretical analysis for the success of the proposed method. For the formal version, please refer to the appendix. 

In Sec.~\ref{sec:setup} we represented our random variable $a \sim \mathbb{P}_A$ and $b \sim \mathbb{P}_B$ in the following forms $a = g(e^c(a),e^s_A(a),0)$ and $b = g(e^c(b),0,e^s_B(b))$, where $e^c(a) \indep e^s_A(a)$, $e^c(b) \indep e^s_B(b)$ and $g$ is an invertible function. 

Before we present our theorem regarding emerging disentanglement between the learned encoders, we provide a necessary definition of an intersection. An intersection of two independent random variables $a$ and $b$ are two representations $a = g(e^c(a),e^s_A(a),0)$ and $b = g(e^c(b),0,e^s_B(b))$, such that, the common encoding $e^c(a) \sim e^c(b)$ has the largest amount of information (measured by entropy $H$). For example, let us consider the case in which domain $A$ consists of images of persons wearing glasses and domain $B$ consists images of smiling persons. In this case, we can encode the samples of $A$ into (i) an identity and pose encoding and (ii) a glasses encoding. Similarly, we can encode the samples of $B$ into the first encoding of domain $A$ and the encoding of the smile. This representation forms an intersection, since we cannot transfer common information from the glasses and the smile into the common part. 

\begin{definition}[Intersection]\label{def:inter} We say that the two representations $a = g(e^c(a),e^s_A(a),0)$ and $b = g(e^c(b),0,e^s_B(b))$ form an intersection between $a$ and $b$, if for any other representation $a = \hat{g}(\hat{e}^c(a),\hat{e}^s_A(a),0)$ and $b = \hat{g}(\hat{e}^c(b),0,\hat{e}^s_B(b))$, such that, $\hat{g}$ is invertible and $\hat{e}^c(a) \sim \hat{e}^c(b)$, we have: $H(\hat{e}^c(a)) \leq H(e^c(a))$.
\end{definition}

The following theorem shows that under reasonable conditions, by minimizing the proposed losses, we obtain a disentangled representation.

\begin{theorem}[Informal]
In the setting of Sec.~\ref{sec:setup}. Let $a \sim \mathbb{P}_A$ and $b \sim \mathbb{P}_B$ be two random variables. Assume that the representations $g(e^c(a),e^s_A(a),0)$ and $g(e^c(b),0,e^s_B(b))$ form an intersection between $a$ and $b$. Assume that we cannot recover the sample $a$ from the separate encoding $E^s_A(a)$. Assume that the reconstruction and adversarial losses are minimized by $E^c, E^s_A, E^s_B$ and $G$. Then, we obtain a disentanglement between $E^c(a)$ and $E^s_A(a)$, such that, $E^c(a)$ captures the information of $e^c(a)$ and $E^s_A(a)$ captures the information of $e^s_A(a)$.
\end{theorem}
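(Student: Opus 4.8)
The plan is to prove the statement as an information-theoretic identifiability result: translate each minimized loss into an exact constraint, then pin down $E^c$ and $E^s_A$ up to lossless reparametrization. First I would convert the hypotheses into clean statements. Exact minimization of $\mathcal{L}_{recon}$ gives $a = G(E^c(a),E^s_A(a),0)$ for (almost) every $a\sim\mathbb{P}_A$, so the map $a \mapsto (E^c(a),E^s_A(a))$ is injective with left inverse $G$; hence $(E^c(a),E^s_A(a))$ and $a$ determine each other, and through the assumed invertibility of $g$, so do $(E^c(a),E^s_A(a))$ and $(e^c(a),e^s_A(a))$. Exact minimization of $\mathcal{L}_{adv}$ gives the distributional identity $\mathbb{P}_{E^c(A)} = \mathbb{P}_{E^c(B)}$, i.e. $E^c(a)\sim E^c(b)$. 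The zero loss together with reconstruction on $B$ yields the symmetric bijection $b \leftrightarrow (E^c(b),E^s_B(b)) \leftrightarrow (e^c(b),e^s_B(b))$. The goal is then to show that $e^c(a)$ and $E^c(a)$ are in bijection, and consequently that $E^s_A(a)$ and $e^s_A(a)$ are as well.

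Next I would establish the \emph{upper bound} direction, that $E^c$ carries nothing beyond the shared content. The previous step shows the learned quadruple $(G,E^c,E^s_A,E^s_B)$ is an admissible representation in the sense of Definition~\ref{def:inter}: $G$ is invertible on the data and $E^c(a)\sim E^c(b)$. Applying the maximality in Definition~\ref{def:inter} to the true intersection $(g,e^c,e^s_A,e^s_B)$ yields $H(E^c(a)) \le H(e^c(a))$. I would then argue that $E^c(a)$ cannot encode domain-specific content: any dependence of $E^c(a)$ on $e^s_A(a)$ (the ``glasses'' factor, absent in $B$) would make the marginal $\mathbb{P}_{E^c(A)}$ exhibit variation that $\mathbb{P}_{E^c(B)}$ cannot match, contradicting $E^c(a)\sim E^c(b)$; by the independence $e^c(a)\indep e^s_A(a)$ this forces $E^c(a)$ to be a (possibly lossy) function of $e^c(a)$ alone.

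I would then close the \emph{lower bound} direction, that $E^c$ captures all of $e^c$ and $E^s_A$ exactly $e^s_A$. From the reconstruction bijection, every bit of $a$, hence of $e^c(a)$, lives in the pair $(E^c(a),E^s_A(a))$. The non-recoverability hypothesis, that $a$ is not a function of $E^s_A(a)$, rules out the degenerate solution in which $E^s_A$ absorbs the whole image while $E^c$ collapses; it forces a nonempty share of the information about $a$ into $E^c$. Combining this with the upper bound $H(E^c(a))\le H(e^c(a))$, the fact that $E^c(a)$ is a function of $e^c(a)$, and the additive split $H(a)=H(e^c(a))+H(e^s_A(a))$ coming from $e^c(a)\indep e^s_A(a)$, I would conclude $H(E^c(a))=H(e^c(a))$ with $e^c(a)$ and $E^c(a)$ mutually determined, so that the complementary factor $E^s_A(a)$ is in bijection with $e^s_A(a)$. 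This is exactly the claimed disentanglement.

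The hard part will be the lower bound, namely excluding the intermediate failure in which $E^c$ recovers only a strict part of $e^c$ while the residual shared content is smuggled into $E^s_A(a)$ (which appears legal, since the zero loss constrains $E^s_A$ only on $B$, the adversarial loss is satisfied by any distributionally shared subcode, and $a$ stays unrecoverable from $E^s_A(a)$ alone). Maximality in Definition~\ref{def:inter} controls only the true representation and yields an upper bound, while non-recoverability only blocks total collapse; closing the gap cleanly seems to require showing that the learned common code itself attains the intersection, or an explicit minimality/efficiency pressure on $E^s_A$ (the $L_1$ form of $\mathcal{L}_{zero}$ hints at such a sparsity mechanism). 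A secondary technical point is upgrading the entropy equalities into genuine two-sided (a.e.) functional determinations, which I would handle via the data-processing inequality together with the equality conditions for the chain-rule decompositions used above.
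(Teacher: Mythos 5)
Your translation of the minimized losses into exact constraints, and your upper bound $H(E^c(a)) \le H(e^c(a))$ obtained by plugging the learned quadruple $(G,E^c,E^s_A,E^s_B)$ into Definition~\ref{def:inter}, both match the paper's formal proof. But there are two genuine gaps. The first is your argument that $E^c(a)$ must be a function of $e^c(a)$: the claim that ``any dependence of $E^c(a)$ on $e^s_A(a)$ would make $\mathbb{P}_{E^c(A)}$ exhibit variation that $\mathbb{P}_{E^c(B)}$ cannot match'' is false, because equality of marginals is a purely distributional statement. For instance, if $e^c(a)$ and $e^s_A(a)$ are independent bits of equal entropy, setting $E^c(a) := e^s_A(a)$ and $E^s_A(a) := e^c(a)$ (with $E^c(b) := e^c(b)$) satisfies reconstruction, the adversarial constraint, and your reading of non-recoverability, yet $E^c(a)$ is independent of $e^c(a)$. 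The paper's mechanism for this step is entirely different and is the missing idea: it applies the maximality of Definition~\ref{def:inter} to the \emph{augmented} representation $\hat{e}^c(a) := (e^c(a),E^c(a))$, $\hat{e}^s_A(a) := (e^s_A(a),E^s_A(a))$, $\hat{e}^s_B(b) := (e^s_B(b),E^s_B(b))$ with a suitably defined invertible decoder $g'$, getting $H(e^c(a),E^c(a)) \le H(e^c(a))$; since trivially $H(e^c(a),E^c(a)) \ge H(e^c(a))$, equality holds and $E^c(a)$ is a deterministic function of $e^c(a)$. (Your instinct that something is delicate here is sound: even the paper's step silently assumes the augmented common parts are identically distributed across the two domains, which it does not verify.)

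The second gap is the one you flag yourself: under your qualitative reading of ``cannot recover $a$ from $E^s_A(a)$'' (merely ruling out total collapse), the smuggling of residual shared content into $E^s_A(a)$ indeed cannot be excluded, and no sparsity mechanism from the $L_1$ zero loss rescues this --- the paper explicitly states the zero loss plays no role in the analysis. The paper's resolution is to make the assumption quantitative: the formal theorem assumes $H(E^s_A(a)) \le H(e^s_A(a)) + \epsilon$. With this cap, the conclusion follows from a short entropy computation rather than from any bijection argument: reconstruction plus data processing give $H(E^c(a),E^s_A(a)) \ge I(E^c(a),E^s_A(a);a) \ge I(G(E^c(a),E^s_A(a),0);a) = H(a) = H(e^c(a)) + H(e^s_A(a))$, where the last equality uses $e^c(a) \indep e^s_A(a)$ and invertibility of $g$; combining with $H(E^c(a)) \le H(e^c(a))$ and the entropy cap yields $I(E^c(a);E^s_A(a)) \le \epsilon$ and $H(E^c(a)) \ge H(e^c(a)) - \epsilon$. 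Note, finally, that the paper's conclusions are exactly these $\epsilon$-approximate statements (near-independence, functional dependence, near-full entropy), not the exact two-sided bijections you aim for, so the ``upgrading'' step in your last paragraph is not needed and would in fact not be attainable from these hypotheses.
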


The theorem makes three types of assumptions. The first type is about the modeling of the data, i.e., that it follows the problem definition in Sec.~\ref{sec:setup} and that the shared part of the model ($e^c$) is an intersection of the two domains. The second assumption is regarding the separate encoder we learn ($E^s_A$) and it states that one cannot reconstruct $a$ from $E^s_A(a)$. The last group of assumptions concerns the losses, which we minimize in our algorithm. 

The conclusion of this theorem is that under the proposed assumptions, (i) the common $E^c(a)$ and separate $E^s_A(a)$ parts are independent, (ii) the common part $E^c(a)$ captures the information in the underlying $e^c(a)$, and (iii) the separate part $E^s_A(a)$ captures the information in $e^s_A(a)$. Therefore, we obtain the desired encoding of domain $A$. By symmetric arguments, we arrive at the same conclusions for $E^c(b)$ and $E^s_B(b)$.

\section{Experiments}

To evaluate our method, we consider the celebA~\cite{celeba} dataset, which consists of celebrity face images with different attributes. We consider the smile, glasses, facial hair, male, female, blond and black hair attributes. Each of these attributes can be used as domain $A$ or $B$ symmetrically. 

\subsection{Guided translation between domains}

\begin{table*}[t]
\centering
\begin{small}
  \begin{tabular}{lcccccc}
    \toprule
     &  Smile To & Glasses & Facial Hair & Smile To & Facial Hair & Glasses To \\ 
     &  Glasses & To Smile & To Smile & Facial Hair & To Glasses& Facial Hair\\
    \midrule
  	Fader networks~\cite{lample2017fader} & 76.8\% & 97.3\% & 95.4\% & 84.2\% & 77.8 \% & 85.2\% \\ 
    Guided content transfer~\cite{press2018emerging} & 45.8\% & 92.7\% & 85.6\% & 85.1\% & 38.6\% & 82.2\% \\
    MUNIT~\cite{munit} & 7.3\% & 9.2\% & 9.3\% & 8.4\% & 7.3\%  & 8.5\% \\
    DRIT~\cite{Lee_2018_ECCV} & 8.5\% & 6.3\% & 6.3\% & 10.3\% & 8.6\%  & 10.1\% \\
    Ours & 	 91.8\% & 99.3\% & 93.7\% & 87.1\% & 93.1\% & 97.2\% \\ %
    \bottomrule
\end{tabular}
\end{small}
\smallskip
  \caption{We pretrain a classifier to distinguish between samples in $A$ (e.g. images of persons with glasses) and samples in $B$ (e.g. images of persons with smile). We then sample $a \in A$, $b \in B$ from the test samples and check the membership of the generated image $G(E^c(b),E_A^s(a), 0))$ in $A$. Similarly, in the reverse direction, we check the membership of $G(E^c(a), 0, E_B^s(b))$ in $B$. }
  \label{tab:classifier}
\end{table*}

\begin{table*}[t]
\centering
\begin{small}
  \begin{tabular}{lcccccc}
    \toprule
    &  Smile To & Glasses & Facial Hair & Smile To & Facial Hair & Glasses To \\ 
     &  Glasses & To Smile & To Smile & Facial Hair & To Glasses& Facial Hair\\
    \midrule
    Question (1) ours & 4.74 $\pm 0.13$ & 4.30 $\pm 0.21$ & 4.26 $\pm 0.20$ & 4.30 $\pm 0.15$& 4.18 $\pm 0.17$ & 4.50 $\pm 0.18$ \\ %
    Question (2) ours & 3.92 $\pm 0.16$ & 4.45 $\pm 0.12$ & 4.03 $\pm 0.15$ & 3.34 $\pm 0.17$& 3.85 $\pm 0.20$& 3.95 $\pm 0.22$\\ 
    Question (3) ours & 3.95 $\pm 0.23$& 3.20 $\pm 0.24$& 3.24 $\pm 0.25$& 3.22 $\pm 0.27$& 3.49 $\pm 0.22$& 3.39 $\pm 0.23$\\
    \midrule
    Question (1) for \cite{press2018emerging} & 3.67 $\pm 0.17$ & 4.16 $\pm 0.18$& 3.39 $\pm 0.19$& 3.34 $\pm 0.13$& 4.24 $\pm 0.12$& 3.15 $\pm 0.15$\\ %
    Question (2) for \cite{press2018emerging} & 1.87 $\pm 0.35$ & 4.42 $\pm 0.22$& 3.00 $\pm 0.32$& 2.67 $\pm 0.33$ & 2.20 $\pm 0.42$& 3.30 $\pm 0.22$\\
    Question (3) for \cite{press2018emerging} & 3.95 $\pm 0.15$& 2.93 $\pm 0.22$ & 3.37 $\pm 0.25$& 3.40 $\pm 0.27$ & 3.43 $\pm 0.28$ & 3.75 $\pm 0.20$\\
    \bottomrule
\end{tabular}
\end{small}
\smallskip
  \caption{Given $20$ randomly selected images $a \in A$ and $b \in B$, we consider the generated image  $G(E^c(a), 0, E_B^s(b)))$ and ask if (1) a's separate part is removed (2) b's separate part is added (3) a's common part is preserved (similarly in the reverse direction). Mean opinion scores in the range of 1 to 5 are reported, where higher is better.   }
    \label{tab:user_study}
\end{table*}


In Fig.~\ref{fig:glasses_sheet}, we consider $A$ to be the domain of images of smiling persons and $B$ to be the domain of images of persons with glasses. Given a sample $a \in A$ (top row) and a sample $b \in B$ (left column), each image constructed is of the form $G(E^c(a), 0, E^s_B(b))$. The common features of image $a$ (its identity) are preserved, the smile is removed, and the glasses of $b$ are added (the guide image). The reverse direction, as well as other cross domain translations, are depicted in the appendix. 

In order to evaluate the success of the translation numerically, we pretrain a classifier to distinguish between images from domain $A$ and domain $B$. If the specific part of the domain $A$ was successfully removed (for example, smile), and the specific part of domain $B$ was successfully added (for example, glasses), then the classifier should classify the translated image as a domain $B$ image. Tab.~\ref{tab:classifier} shows the success of our method in this case, in comparison to the baseline methods of~\cite{press2018emerging,lample2017fader,munit,Lee_2018_ECCV}, which are much less successful in switching attributes. Specifically: (i) MUNIT~\cite{munit} and DRIT~\cite{Lee_2018_ECCV} only change style, but the content is unchanged, (ii) Fader networks~\cite{lample2017fader} translated between the domains, in a less convincing way, that also ignores the guide image, and (iii) The method of Press et al.~\cite{press2018emerging} adds the element of the target domain, but fails to remove the content of the source domain.

By conducting a user study, we evaluate the ability to (a) remove the specific attribute of domain $A$ (b) add the specific attribute of domain $B$, and (c) preserve the identity of the image encoded in the common encoder.  To do so, given an image $a$ from domain $A$ and an image $b$ from domain $B$, we present the user with two images $a \in A$, $b \in B$ and the generated image $G(E^c(a), 0, E^s_B(b))$ (or $G(E^c(b), E^s_A(a), 0)$ for the reverse direction), and ask the following three questions: 1. Is the specific attribute of $A$ (e.g smile) removed? 2. Is the guided image $b$ specific attribute (e.g glasses) added? 3. Is the identify of $a$'s image preserved (that is, is the common attribute from $a$ still present in the image)? Mean Opinion Score on the scale of $1$ to $5$, are collected for $20$ randomly selected test images in $A$ and $B$ by $20$ different users is reported in Tab.~\ref{tab:user_study}.  
For most translations, the ability to remove $A$'s specific attribute and add $B$'s specific attribute is significantly better than that of~\cite{press2018emerging}, while the ability to preserve the identity of $a$ is on-par with \cite{press2018emerging}. The Fader networks~\cite{lample2017fader} provides a generic (unguided) cross domain translation, and MUNIT~\cite{munit} transfers style and not content and were therefore not included in the user study. See the appendix for the results obtained by these methods.

\subsection{Linearity of latent space}

\begin{figure}
\centering
  \includegraphics[width=0.95\linewidth, clip]{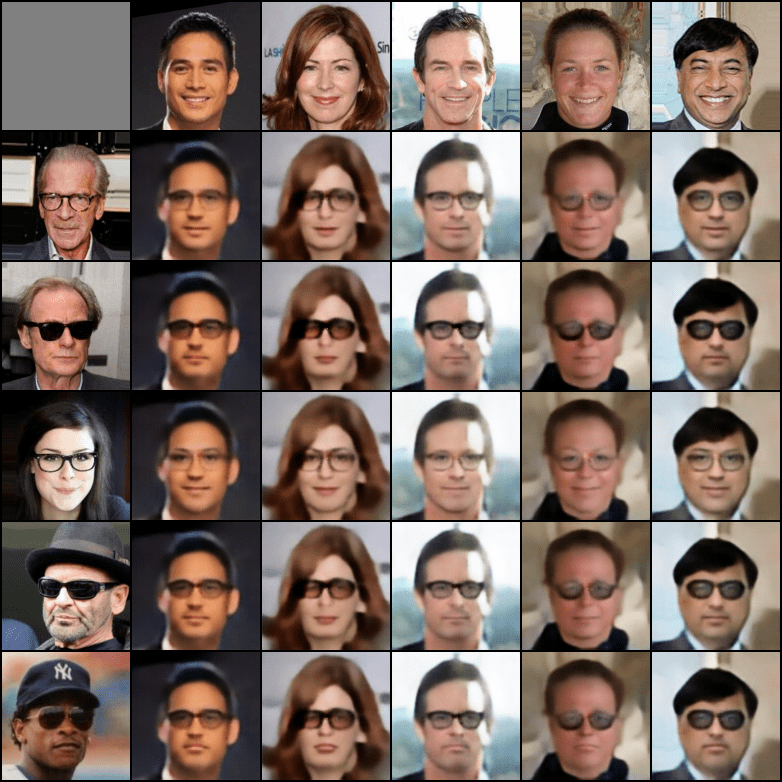} \\ 
   \caption{Images $a \in A$ are in the top row and $b \in B$ in the left column. The images constructed are $G(E^c(a),0, E_B^s(b)))$, consisting of the common parts of $a$  and separate part of $b$ (smile is removed and glasses added).  }
  \label{fig:glasses_sheet}
  ~\\
\centering
  \includegraphics[width=0.95\linewidth, clip]{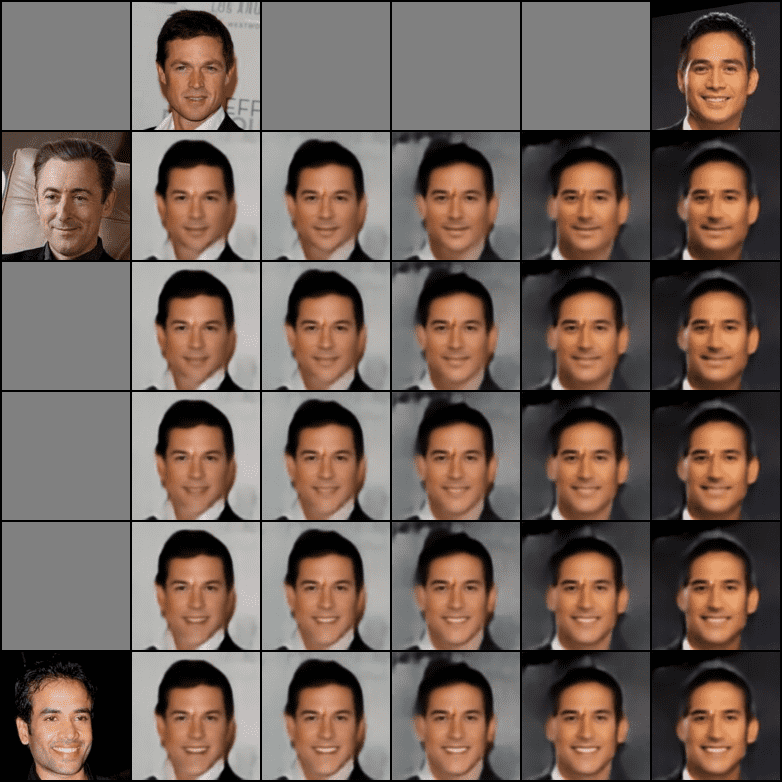} \\ 
  \caption{Interpolation in the latent space of domain $A$ (smiling). We linearly interpolate between the common encoding of the two images in the top row going left to right. Concurrently, we linearly interpolate between the separate encoding of the two images in the left column going top to bottom.  }
  \label{fig:glasses_interpolation_sep_a}
\end{figure}

\begin{figure}
\centering
  \includegraphics[width=0.95\linewidth, clip]{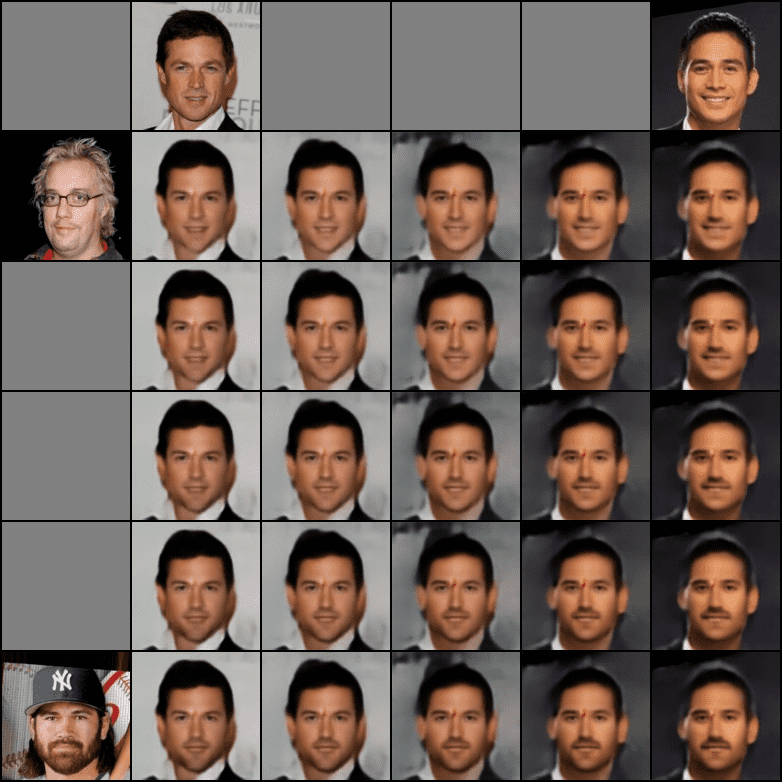} \\
  \caption{Interpolation in the latent space of domains $A$ (smiling) and $B$ (facial hair). We interpolate the common encoding of the two images from domain $A$ in the top row. Concurrently, we linearly interpolate between the separate encoding of the two images from domain $B$ in the left column.}
  \label{fig:glasses_interpolation_sep_b}
  ~\\
\centering
  \includegraphics[width=0.95\linewidth, clip]{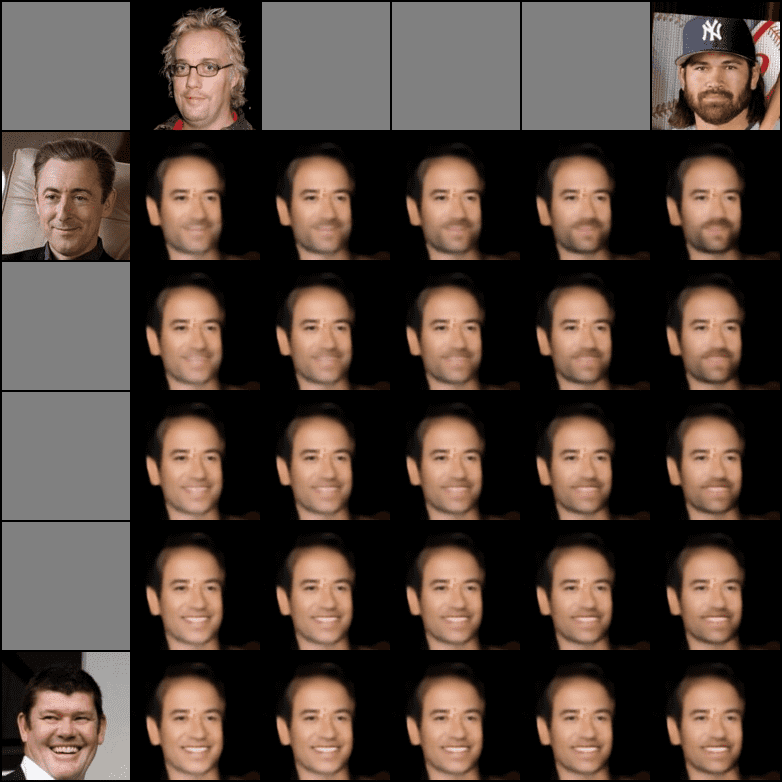}
  \caption{Interpolation domains $A$ (smiling) and $B$ (facial hair). Fixing the common encoding to randomly chosen image, we interpolate between $A$'s separate encoding of the two images in the top row. Concurrently, we interpolate between $B$'s separate encoding of the two images in the left column.}
  \label{fig:glasses_interpolation_sep_c}
  \vspace{-12pt}
\end{figure}
 We evaluate the linearity of the latent representation of $A$'s separate encoder, $B$'s separate encoder and the common encoder. In this case, $A$ serves as the domain of images of smiling persons and $B$ of images of persons with facial hair. In Fig.~\ref{fig:glasses_interpolation_sep_a} the generated images take the form $G(com, a, 0)$, where $com = \alpha E^c(a_1) + (1-\alpha)E^c(a_2)$ and $a = \beta E^s_A(a_3) + (1{-\beta})E^s_A(a_4)$. $\alpha$ ranges between $0$ and $1$, going left to right and $\beta$ ranges from $0$ to $1$, going from top to bottom. $a_1, a_2, a_3, a_4$ are images from domain $A$ (smiling persons), given in the top row and left column. We observe that the latent representations produced by $A$'s separate encoder and the common encoder are linear. 

Similarly, in Fig.~\ref{fig:glasses_interpolation_sep_b} we evaluate the linear separability of $B$'s separate encoder. Generated images take the form $G(com, 0, b)$, where $com = \alpha E^c(a_1) + (1-\alpha)E^c(a_2)$ and $b = \beta E^s_B(b_1) + (1{-\beta)}E^s_B(b_2)$. $\alpha$ ranges between $0$ and $1$, going left to right, and $\beta$ ranges between $0$ and $1$, going from top to bottom. $a_1, a_2$ are images from domain $A$ given in the top row and $b_1, b_2$ are images from domain $B$ in the left column.

Lastly, in Fig.~\ref{fig:glasses_interpolation_sep_c}, we fix the common part from some image $c$, and evaluate the linearity of both separate encoders applied together. Generated images take the form $G(com, a, b)$, where $com = E^c(c)$ and $a = \alpha E^s_A(a_1) + (1-\alpha)E^s_A(a_2)$  and $b = \beta E^s_B(b_1) + (1-\beta)E^s_B(b_2)$. $\alpha$ ranges from $0$ to $1$ going left to right and $\beta$ ranges from $0$ to $1$ going from top to bottom. $c$ is a fixed image in $A$, while $a_1, a_2$ are images from domain $A$ given in the top row and $b_1, b_2$ are images from domain $B$ in the left column. 

Note that in this last case, we generate images from the union domain, i.e., create images that have, in addition to the common information, both the added content of $A$ and of $B$. The method also allows us to consider the intersection domain. In the depicted example, domain $A$ includes images of persons with glasses and $B$ includes images of smiling persons. The intersection of $A$ and $B$ consists of images of non-smiling persons (without glasses). Having never seen such images in the training set, our method now allows us to generate images from this distribution. This is illustrated in Fig.~\ref{fig:glasses_removal}. To do so, the generated image is of the form $G(E^c(x), 0, 0)$, where $x$ is a member of $A$ or $B$.

\subsection{Unsupervised Domain Adaptation}

To evaluate the disentangled representation, we perform unsupervised domain adaptation experiments translating from MNIST to SVHN. In this problem, the underlying framework is used to translate from MNIST to SVHN and a pretrained classifier is used to evaluate the percentage of images mapped to the same label in the target domain. In our case, given an MNIST digit $a$, we randomly sample an SVHN digit $b$ and consider the translation to SVHN as $G(E^c(a), 0, E^s_B(b))$. 
In the MNIST to SVHN direction our method has $61.0\%$ accuracy beating Vae-NAM~\cite{vaenam} (51.7\%), NAM~\cite{nam} (31.9\%), DistanceGAN~\cite{distgan} (27.8\%) and CycleGAN~\cite{CycleGAN2017} (17.7\%). In the reverse direction it has $41.0\%$ accuracy beating Vae-NAM (37.4\%), NAM (33.3\%), DistanceGAN (27.8\%)  and CycleGAN (26.1\%).


\subsection{Ablation study}

We consider the formulation of our objective with each of the three parts missing: the adversarial loss, the zero loss and the reconstruction loss. We conduct an ablation study in the case of $A$ being images of smiling persons and $B$ is the domain of images of persons with glasses. The results, which appear in Tab.~\ref{tab:ablation} and shown visually in the appendix, indicate that when the reconstruction loss is missing, the method is unable to generate realistic looking images. 
In the case of no adversarial loss, the method is able to remove the smile but unable to add glasses from $b$. Without the adversarial loss, the common encoder can contain information specific to the domain, such as glasses, and so there would be no need to encode it in the separate encoder. Lastly, without the zero loss, the translation is slightly worse but still succeeds to a large extent. As shown in our analysis, the enforcing of the zero loss is not required to achieve the desired disentanglement effect.

\begin{table}[t]
\begin{small}
\begin{center}
\begin{tabular}{lcc}
    \toprule
  	All Losses & 91.8\%	  &  99.3\% \\ %
  	\midrule
  	No  zero loss & 85.4\% & 97.8\% \\
    No adversarial loss & 64,5\% & 79.3\% \\
    No reconstruction loss & 50.0\% & 50.0\% \\ 
    \bottomrule
\end{tabular}
\end{center}
\end{small}
\caption{An ablation study for the case where $A$ is persons with glasses and $B$ is smiling persons. We consider the same setting as Tab~\ref{tab:classifier}, and consider the effect of removing each loss on the classification loss. The left column is for the Smile To Glasses task and the right column is for the Glasses To Smile task.}
  \label{tab:ablation}
\end{table}

\begin{figure}
\vspace{-.3cm}
\centering
  \includegraphics[width=0.95\linewidth, clip]{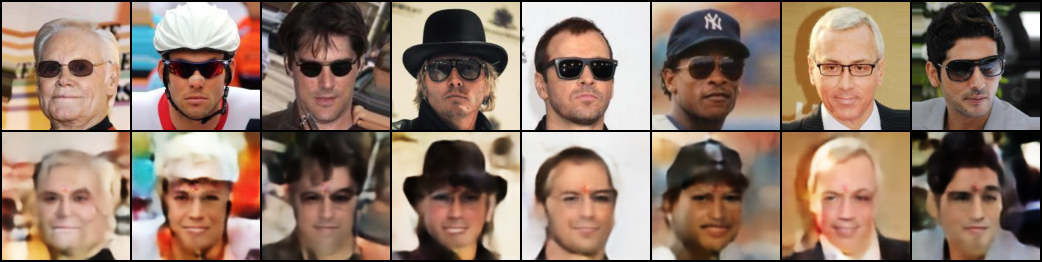} \\ 
  \caption{Generating images from the intersection of $A$ and $B$. (top) image from $A$. (bottom) mapping to the intersection domain.}
  \label{fig:glasses_removal}
\end{figure}

\section{Conclusions}

The field of unsupervised learning presents new problems that go beyond the classical methods of clustering or density estimation. The problem of unsupervised cross-domain translation was not considered solvable up to a few years ago. Recently, a set of guided translation problems have emerged, in which one maps between domains based on the features of a reference image in the target domain. While the literature methods treat the two domains in an asymmetric way (one domain donates style and another content, or one domain is a subset of the second), our work is the first to treat the domains in a symmetric way. 

Our work also presents the first method that is able to create images that have guided elements from two different domains, extracted from donor images $a$ and $b$ (one from each domain) and overlaid on a third image (taken from either domains) that donates the shared content.

The method we propose is shown to provide a sufficient set of constraints in order to support this conversion. It does not employ GANs in the visual domains, or cycles of any sort. The constraints are simple structural and reconstruction constraints, with the addition of a domain confusion loss, applied in the shared latent space.

Our experiments show that the new method provides superior results for the symmetrical guided domain problem  in comparison to the literature methods. Going forward, the ability to intersect domains (creating a domain that is orthogonal to the specific parts of the two domains), construct their union (combining both specific parts and the shared part), and consider the difference between the two, could lead to the ability to perform domain arithmetics and construct complex visual domains by combining, in a very flexible way, an unlimited number of domains.

\subsection*{Acknowledgements}
This project has received funding from the European Research Council (ERC) under the European Unions Horizon 2020 research and innovation programme (grant ERC CoG
725974). The contribution of Sagie Benaim is part of a
Ph.D. thesis research conducted at Tel Aviv University.

{\small
\bibliographystyle{ieee_fullname}
\bibliography{main}
}

\appendix

\section{Additional Guided Translation Results}

We provide the reverse translation to that given in Fig.~2 of the main report as well as additional cross domain translations in Fig.~\ref{fig:glasses_smiling}, \ref{fig:beard_smiling}, \ref{fig:smiling_beard}, \ref{fig:glasses_beard}, \ref{fig:beard_glasses}, \ref{fig:male_female}, \ref{fig:female_male}, \ref{fig:blond_black} and \ref{fig:black_blond}. 

Both forward and reverse directions are trained simultaneously using the same model as our model is symmetric. In the reverse direction, Given a sample $b \in B$ (top row) and a sample $a \in A$ (left column), each image constructed is of the form $G(E^c(b), E^s_A(a), 0)$

\begin{figure}
\centering
  \includegraphics[width=0.95\linewidth, clip]{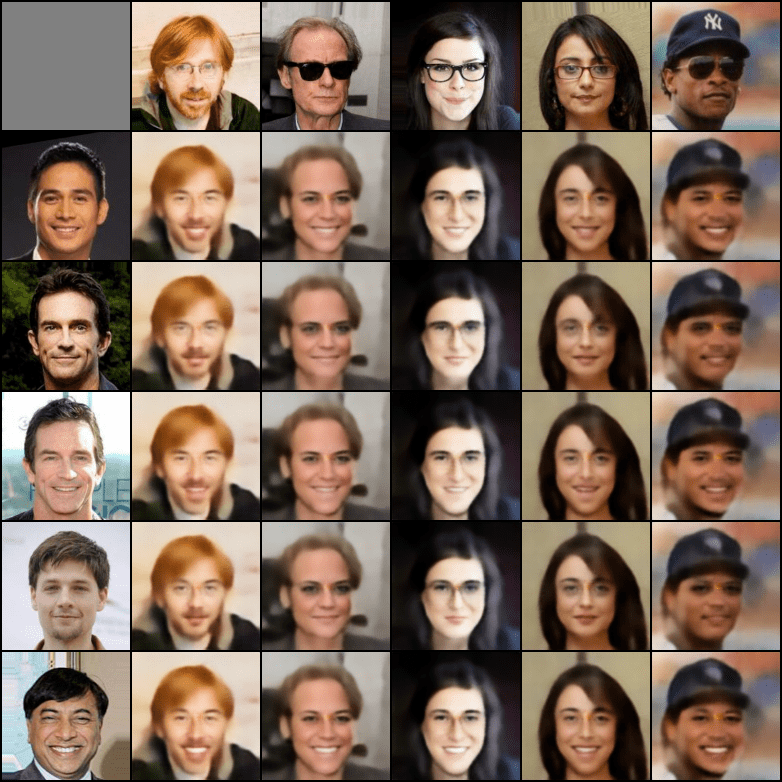} \\ 
   \caption{Translating from the domain of persons with glasses to the domain of smiling persons (reverse translation to Fig.~2 in main report)}
  \label{fig:glasses_smiling}
\end{figure}

\begin{figure}
\centering
  \includegraphics[width=0.95\linewidth, clip]{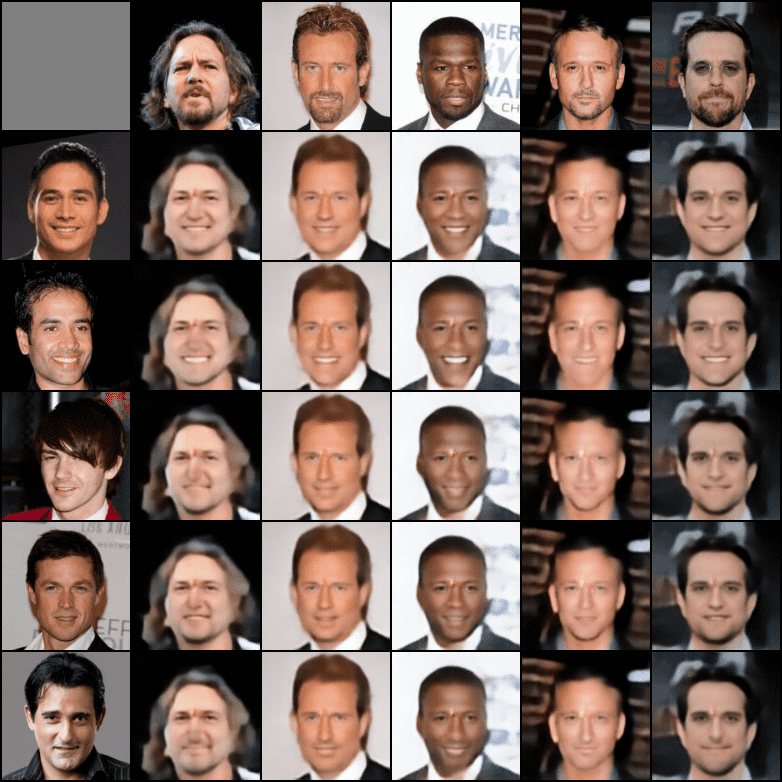} \\ 
   \caption{Translating from the domain of persons with facial hair to the domain of smiling persons.}
  \label{fig:beard_smiling}
\end{figure}

\begin{figure}
\centering
  \includegraphics[width=0.95\linewidth, clip]{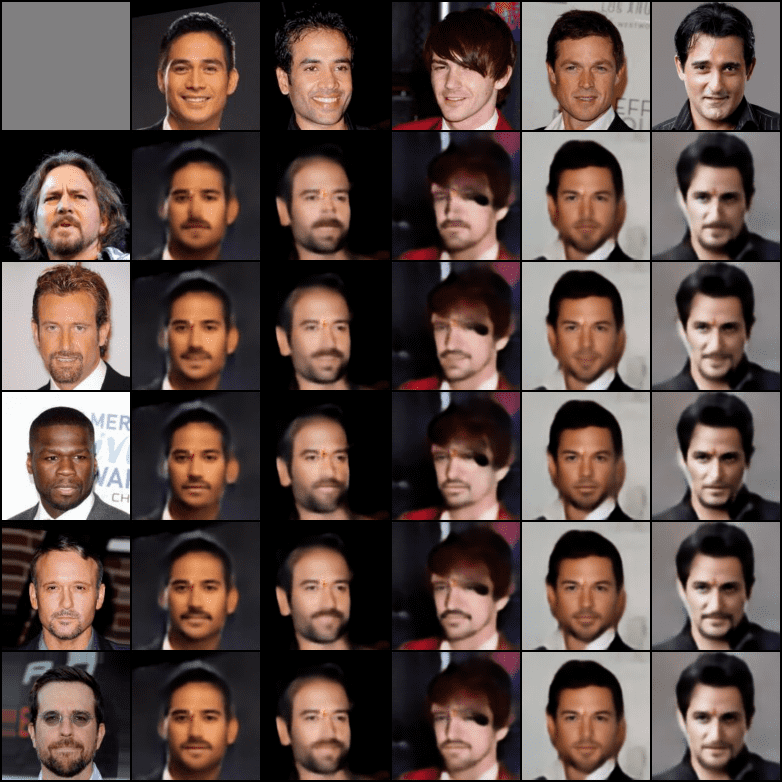} \\ 
   \caption{Reverse translation from the domain of smiling persons to the domain of persons with facial hair.}
  \label{fig:smiling_beard}
\end{figure}

\begin{figure}
\centering
  \includegraphics[width=0.95\linewidth, clip]{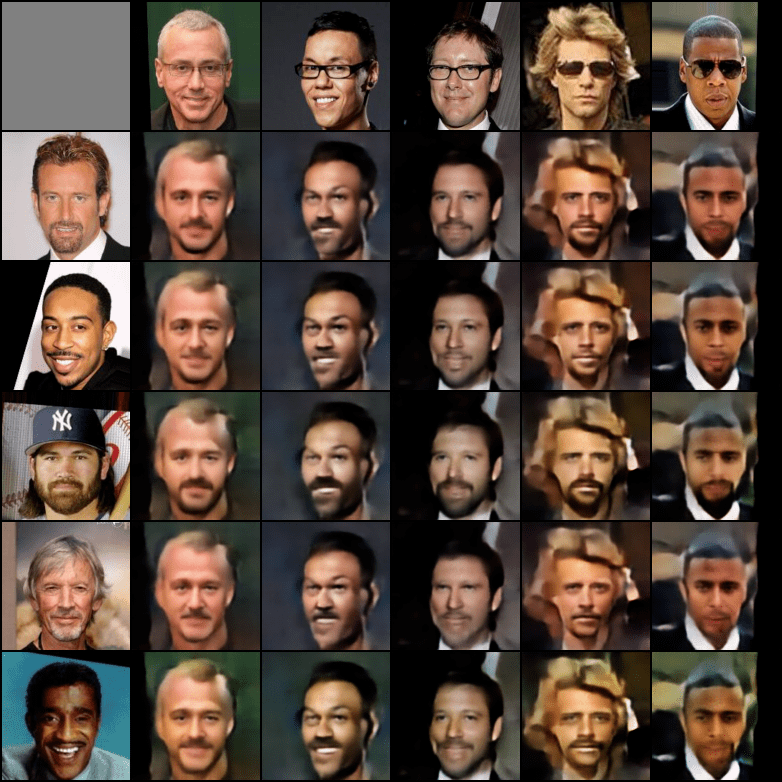} \\ 
   \caption{Translating from the domain of persons with glasses to the domain of persons with facial hair.}
  \label{fig:glasses_beard}
\end{figure}

\begin{figure}
\centering
  \includegraphics[width=0.95\linewidth, clip]{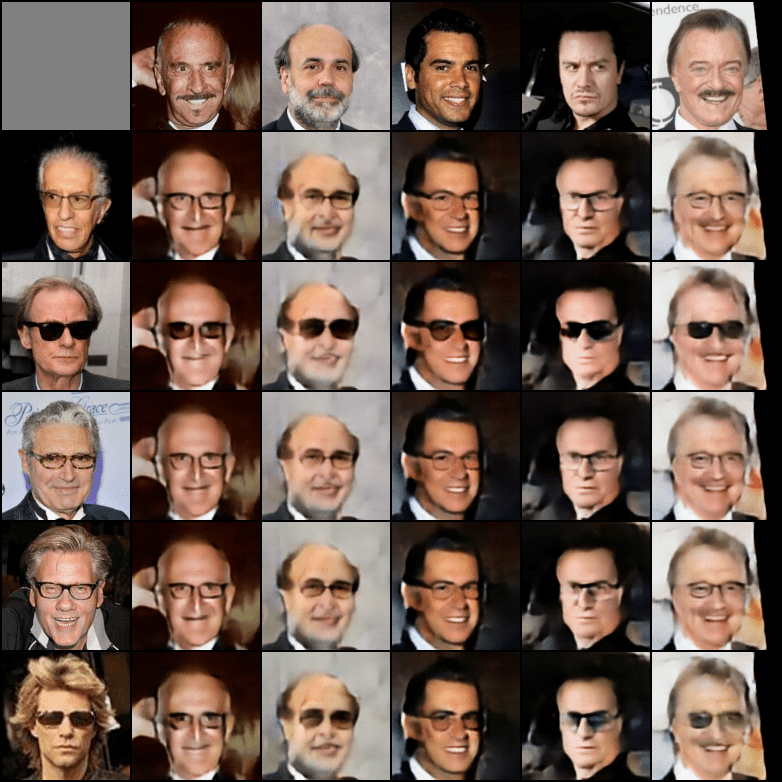} \\ 
   \caption{Reverse translation from the domain of persons with facial hair to the domain of persons with glasses.}
  \label{fig:beard_glasses}
\end{figure}

\begin{figure}
\centering
  \includegraphics[width=0.95\linewidth, clip]{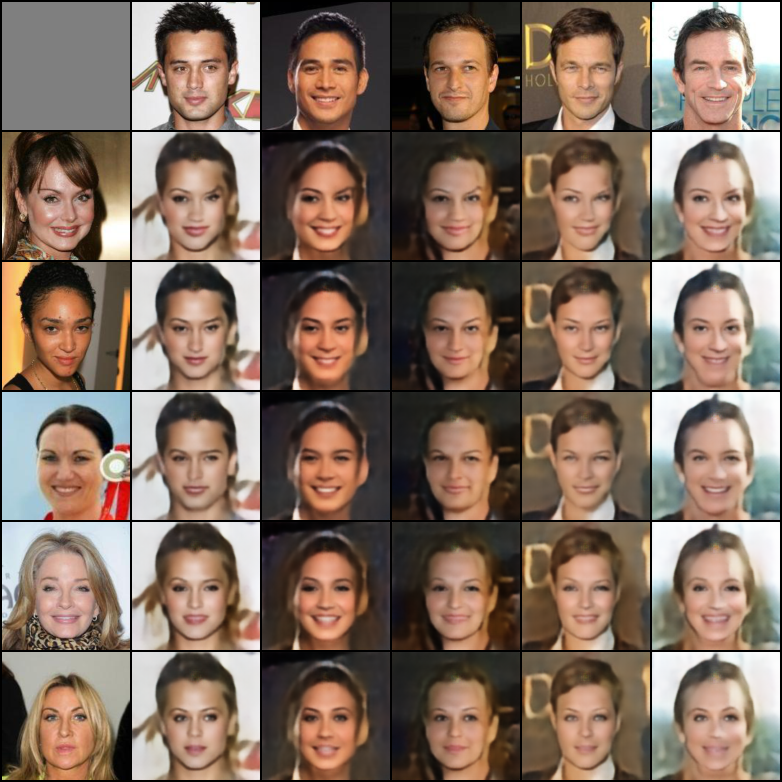} \\ 
   \caption{Translating from the domain of males to the females.}
  \label{fig:male_female}
\end{figure}

\begin{figure}
\centering
  \includegraphics[width=0.95\linewidth, clip]{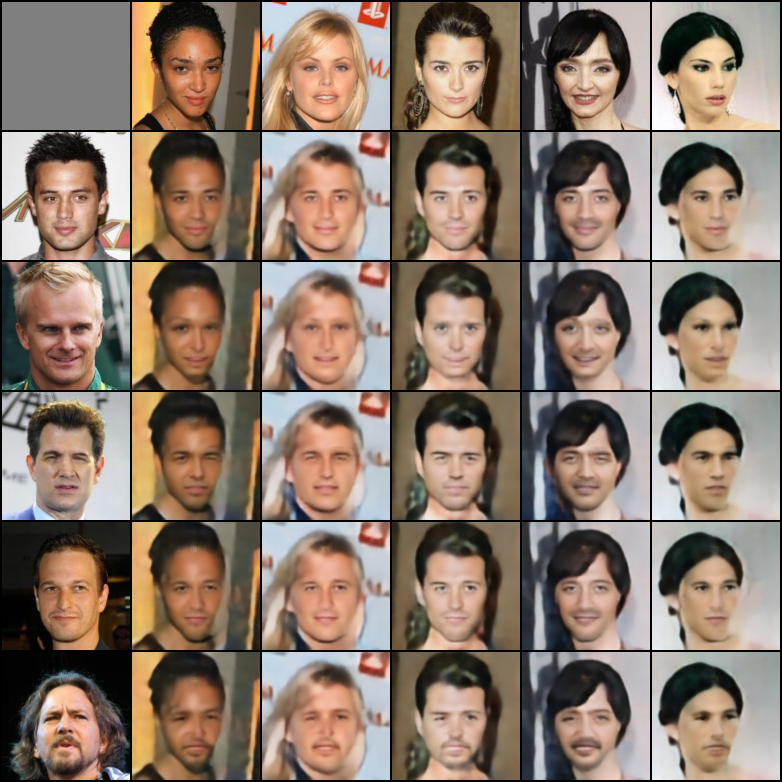}\\
   \caption{Reverse translation from the domain of females to the domain of females.}
  \label{fig:female_male}
\end{figure}

\begin{figure}
\centering
  \includegraphics[width=0.95\linewidth, clip]{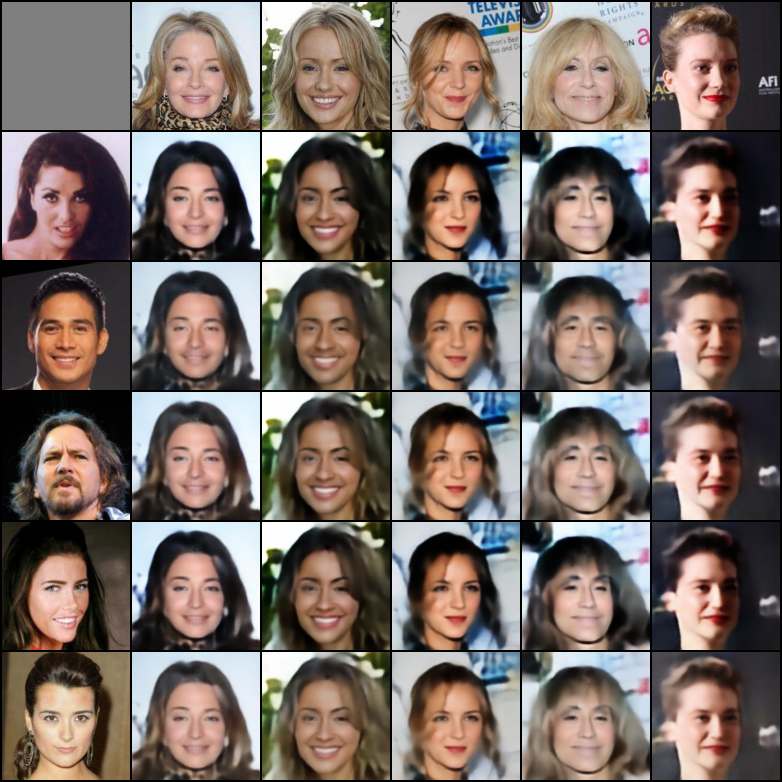}\\
   \caption{Translation from the domain of blond hair to the domain of black hair.}
  \label{fig:blond_black}
\end{figure}

\begin{figure}
\centering
  \includegraphics[width=0.95\linewidth, clip]{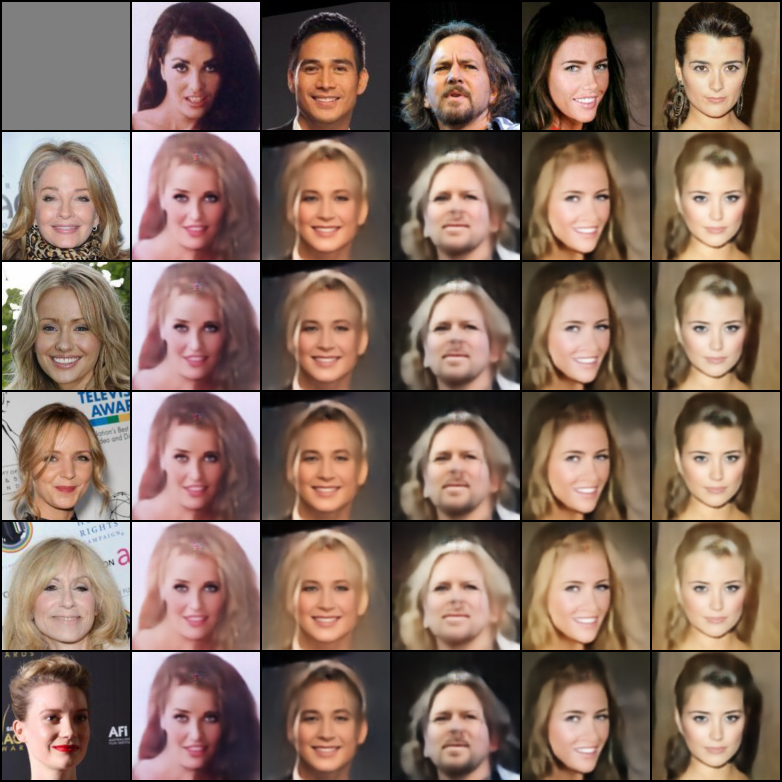}\\
   \caption{Reverse translation from the domain of black hair to the domain of blond hair.}
  \label{fig:black_blond}
\end{figure}

\section{Architecture and Hyperparameters}

We consider samples in $A$ and $B$ to be images in $\mathbb{R}^{3\times 128\times 128}$.
The encoders $E_c$, $E_s^A$ and $E_s^B$ each consist of $6$ convolutional blocks. Similarly, $G$ consists of $6$ de-convolutional blocks. 

A convolutional block $d_k$ consisting of: (a) $4\times4$ convolutional layer with stride $2$, pad $1$ and $k$ filters  (b) a spectral normalization layer (c) an instance normalization layer (d) a Leaky ReLU activation with slope $0.2$. Similarly a de-convolutional block $u_k$ consists of: (a) $4\times4$ de-convolutional layer with stride $2$, pad $1$ and $k$ filters  (b) a spectral normalization layer (c) an instance normalization layer (d) a ReLU activation.

The structure of the encoders and generators is then:
\begin{align*}
& E_c\text{: }  d_{32}, d_{64}, d_{128}, d_{256}, d_{512-sep}, d_{512-2\cdot sep} \\ 
& E_A^s, E_B^s\text{: } d_{32}, d_{64}, d_{128}, d_{128}, d_{128}, d_{sep} \\
& G\text{: } u_{512}, u_{256}, u_{128}, u_{64}, u_{32}, u^*_{3}
\end{align*}
The last layer of $G$ ($u^*_{3}$) differs in that it doesn't contain a spectral or instance normalization and that Tanh activation is applied instead of ReLU. $sep$ is the dimension of the separate encoders, set to be $25$ for all datasets. 

The latent discriminator $d$ consists of a fully connected layer of $512$ filters, a Leaky ReLU activation with slope $0.2$, second fully connected layer of $1$ filters and a final sigmoid activation. 

For the loss parameters specified in the equation 11 of the main report, $\lambda_1$ is set to $0.001$ and $\lambda_2$ to $1$. 
We use the Adam optimizer  with $\beta_1 = 0.5, \beta_2 = 0.999$, and 
learning rate of $0.0002$. We use a batch size of size $32$ in training.

\section{Theoretical Analysis}\label{sec:appanalysis}

In this section we provide a formal version of Thm.~1 from the main text. For this purpose, we recall a few technical notations from~\cite{Cover:2006:EIT:1146355}:
the Shannon entropy (discrete or continuous) $H(X) := -\mathbb{E}_{X}[ \log_2 \mathbb{P}[X]]$, the conditional entropy $H(X|Y) := H(X,Y) - H(Y)$, the (conditional) mutual information (discrete or continuous) $I(X; Y| Z) := H(X|Z) - H(X|Y,Z)$. For clarity, we list a few important identities that are being used throughout the proofs in this section. For any two random variables $X$ and $Y$, we have: $I(X;Y) = H(X) + H(Y) - H(X,Y)$. The data processing inequality, for any random variable $X$ and two functions $f$ and $g$, we have: $I(X;g(f(X))) \leq I(X;f(X))$.

In Sec.~2 in the main text, we represented our random variable $a \sim \mathbb{P}_A$ and $b \sim \mathbb{P}_B$ in the following forms $a = g(e^c(a),e^s_A(a),0)$ and $b = g(e^c(b),0,e^s_B(b))$, where $e^c(a) \indep e^s_A(a)$, $e^c(b) \indep e^s_B(b)$ and $g$ is some invertible function. Our method learns three encoders $E(x) := (E^c(x),E^s_A(x),E^s_B(x))$ and a decoder $G$. 

The following theorem is a formal version of Thm.~1 from the main text.

\begin{theorem}\label{thm:disent}
In the setting of Sec.~2 in the main text. Let $a \sim \mathbb{P}_A$ and $b \sim \mathbb{P}_B$ be two random variables distributed by discrete distributions $\mathbb{P}_A$ and $\mathbb{P}_B$. Assume that the representations $g(e^c(a),e^s_A(a),0)$ and $g(e^c(b),0,e^s_B(b))$ form an intersection between $a$ and $b$, such that,
\begin{equation}\label{eq:infoflow}
\begin{aligned}
H(E^s_A(a)) \leq H(e^s_A(a)) + \epsilon \\
\end{aligned}
\end{equation}
In addition, assume that: $\mathbb{E}_{a} \| G(E^c(a),E^s_A(a),0) - a\|_1 = 0$, $\mathbb{E}_{b} \| G(E^c(b),0,E^s_B(b)) - a\|_1 = 0$ and $\mathbb{P}_{E^c(A)} = \mathbb{P}_{E^c(B)}$, i.e., the distribution of $E^c(A)$ is equal to the distribution of $E^c(B)$. Then, we have the following:
\begin{itemize}
\item $I(E^c(a);E^s_A(a)) \leq \epsilon$.
\item $E^c(a)$ is a function of $e^c(a)$.
\item $H(E^c(a)) \geq H(e^c(a)) - \epsilon$.
\end{itemize}
\end{theorem}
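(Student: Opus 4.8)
The plan is to compress the three hypotheses—exact reconstruction, the entropy bound $H(E^s_A(a))\le H(e^s_A(a))+\epsilon$, and the matched common distribution—into a single entropy identity, and then to read off all three conclusions from it together with the maximality built into Definition~\ref{def:inter}. First I would exploit exact reconstruction: since $\mathbb{E}_{a}\| G(E^c(a),E^s_A(a),0) - a\|_1 = 0$ with a nonnegative integrand, $a = G(E^c(a),E^s_A(a),0)$ almost surely, so $a$ is a deterministic function of $(E^c(a),E^s_A(a))$, while each encoder is a function of $a$. Thus $a$ and $(E^c(a),E^s_A(a))$ are in deterministic bijection on the support, giving $H(a)=H(E^c(a),E^s_A(a))$. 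The generative model writes $a=g(e^c(a),e^s_A(a),0)$ with $g$ invertible and $e^c(a)\indep e^s_A(a)$, so $H(a)=H(e^c(a))+H(e^s_A(a))$. Using $I(X;Y)=H(X)+H(Y)-H(X,Y)$ these combine into
\[
I(E^c(a);E^s_A(a)) = H(E^c(a)) + H(E^s_A(a)) - H(e^c(a)) - H(e^s_A(a)).
\]

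The second step is to control $H(E^c(a))$ through the intersection property. The learned system is itself a candidate representation in the sense of Definition~\ref{def:inter}: the decoder $G$ plays the role of the invertible $\hat g$, since exact reconstruction of \emph{both} domains makes it the inverse of the encoder on its image, and the adversarial hypothesis $\mathbb{P}_{E^c(A)}=\mathbb{P}_{E^c(B)}$ supplies the required $E^c(a)\sim E^c(b)$. Maximality of $e^c$ then yields $H(E^c(a))\le H(e^c(a))$. Substituting this and $H(E^s_A(a))\le H(e^s_A(a))+\epsilon$ into the identity gives $I(E^c(a);E^s_A(a))\le\epsilon$, the first conclusion; feeding the trivial bound $I(E^c(a);E^s_A(a))\ge 0$ into the same identity and again using the bound on $H(E^s_A(a))$ rearranges to $H(E^c(a))\ge H(e^c(a))-\epsilon$, the third conclusion. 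Both are elementary once the identity and the maximality bound are established.

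For the second conclusion—that $E^c(a)$ is a function of $e^c(a)$—I would argue by augmentation against maximality. Consider the enlarged common code $\bar e^c(a):=(e^c(a),E^c(a))$ and $\bar e^c(b):=(e^c(b),E^c(b))$, completed to a representation by keeping $e^s_A,e^s_B$ and setting $\bar g((c,u),s,0):=g(c,s,0)$ (symmetrically on $B$); this stays invertible on the support because $E^c$ is a redundant function of the first coordinate there. If $\bar e^c$ is admissible, maximality forces $H(\bar e^c(a))\le H(e^c(a))$, whereas the chain rule gives $H(\bar e^c(a))=H(e^c(a))+H(E^c(a)\mid e^c(a))\ge H(e^c(a))$; hence $H(E^c(a)\mid e^c(a))=0$, i.e. $E^c(a)$ is a function of $e^c(a)$.

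The main obstacle lies precisely in justifying that $\bar e^c$ is admissible, namely that the joint laws agree across domains, $(e^c(a),E^c(a))\sim(e^c(b),E^c(b))$. The hypotheses deliver only the two marginals—$e^c(a)\sim e^c(b)$ from the generative model and $E^c(a)\sim E^c(b)$ from the adversarial term—and matching marginals do not force matching joints; indeed a single-domain XOR-type reparametrization shows that independence and equal entropies alone are consistent with $E^c$ mixing $e^c(a)$ and $e^s_A(a)$, so the cross-domain constraint is doing essential work. Establishing the joint match is therefore the delicate technical core, and I would attack it by propagating the matched common distribution $\mathbb{P}_{E^c(A)}=\mathbb{P}_{E^c(B)}$ through the shared decoder $G$ and comparing the conditional law of $E^c$ given $e^c$ on the two domains, invoking the full strength of the intersection definition rather than only the entropy-maximality consequence used above.
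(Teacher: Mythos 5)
For the first and third conclusions your argument is correct and is, in substance, the paper's own proof: the paper derives $H(E^c(a),E^s_A(a)) \geq H(a)$ via the data-processing inequality where you use an almost-sure bijection (both are fine for discrete variables), invokes Definition~\ref{def:inter} on the learned tuple $(G,E^c,E^s_A,E^s_B)$ exactly as you do to obtain $H(E^c(a)) \leq H(e^c(a))$, and then combines this with Eq.~\ref{eq:infoflow} inside the identity $I(E^c(a);E^s_A(a)) = H(E^c(a)) + H(E^s_A(a)) - H(E^c(a),E^s_A(a))$. Your augmentation for the second conclusion, $\bar{e}^c(a) := (e^c(a),E^c(a))$, is also the paper's construction (the paper additionally augments the specific codes to $(e^s_A(a),E^s_A(a))$, which changes nothing essential).

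Where you and the paper part ways is exactly the step you decline to take: invoking Definition~\ref{def:inter} for the augmented code requires $(e^c(a),E^c(a)) \sim (e^c(b),E^c(b))$, and the paper simply asserts the definition's conclusion $H(\bar{e}^c(a)) \leq H(e^c(a))$ without ever checking this distributional hypothesis. Your diagnosis that the two marginal conditions do not imply the joint one is correct, and moreover the gap is not repairable by the route you sketch: your XOR remark extends to a full counterexample to conclusion (ii) under the theorem's stated hypotheses. Take $z_c, z_a$ independent uniform bits and $z_b \equiv 0$, so $a = g(z_c,z_a,0)$ and $b = g(z_c,0,0)$; the ground truth forms an intersection because any competitor common code satisfies $H(\hat{e}^c(a)) = H(\hat{e}^c(b)) \leq H(b) = 1 = H(e^c(a))$. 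Now set $E^c(g(u,v,0)) := u \oplus v$, $E^s_A(g(u,v,0)) := v$, $E^s_B := 0$, and $G(p,q,r) := g(p \oplus q, q, 0)$. Reconstruction is then exact on both domains, $\mathbb{P}_{E^c(A)} = \mathbb{P}_{E^c(B)}$ (both uniform bits), and $H(E^s_A(a)) = H(e^s_A(a))$, so every hypothesis holds with $\epsilon = 0$; yet $E^c(a) = z_c \oplus z_a$ is independent of $e^c(a) = z_c$ rather than a function of it. Here the paper's augmented code has $H(\bar{e}^c(a)) = 2 > 1 = H(e^c(a))$, which is consistent because $\bar{e}^c(a)$ is uniform on $\{0,1\}^2$ while $\bar{e}^c(b)$ is supported on the diagonal, so Definition~\ref{def:inter} never applied. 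Conclusions (i) and (iii) do hold in this example, matching the fact that their proofs never use the unjustified step. In short: your proposal is a faithful reconstruction of the paper's proof of (i) and (iii), and your ``delicate technical core'' is a genuine error in the paper's proof of (ii) --- that conclusion needs an extra assumption (e.g., that the joint law of $(e^c, E^c)$ matches across the two domains), and cannot be derived, by your strategy or any other, from the hypotheses as stated.
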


In this theorem, we make a few assumptions. The first assumption concerns the modeling of the data, the second is regarding the separate encoder $E^s_A$ and the last one concerns the losses. 


Our first assumption asserts that the ground truth representation (see Sec.~2) of the random variables $a = g(e^c(a),e^s_A(a),0)$ and $b = g(e^c(b),0,e^s_B(b))$ forms an intersection between them. Put differently, we can partition the information of $a$ and $b$ into independent features $e^c(a)$, $e^s_A(a)$ for $a$ and $e^c(b)$, $e^s_B(b)$ for $b$, such that, the information of $e^c(a) \sim e^c(b)$ is maximal. Informally, any other partition into common and separate parts is unable to put more content information in the common part than the amount the ground truth representations do. For example, in the case where $A$ consists of images of persons with facial hair and $B$ consists of images of persons with glasses, the assumption is verified, since, we cannot transfer information from the separate part (facial hair or glasses) into the common part (identity, pose, etc'). 

The second assumption asserts that the amount of information encoded in $E^s_A(a)$ is bounded by the amount of information encoded in $e^s_A(a)$. Differently viewed, since the function $E^s_A$ is deterministic, we also have $I(E^s_A(a);a) = H(E^s_A(a))$, and therefore, the amount of mutual information between $E^s_A(a)$ and $a$ is bounded as well. This implies that we cannot recover $a$ given $E^s_A(a)$, since we cannot recover $a$ from $e^s_A(a)$. 

The third assumption is that several losses are minimized. In Sec.~3, we introduced reconstruction losses: $\mathcal{L}^A_{recon}$ and $\mathcal{L}^B_{recon}$ and an adversarial loss: $\mathcal{L}_{adv}$. These losses were measured on average with respect to the training set. In Thm.~\ref{thm:disent}, the reconstruction losses $\mathcal{L}^A_{recon}$ and $\mathcal{L}^B_{recon}$ are replaced with their expected versions (we take expectations $\mathbb{E}_a$ and $\mathbb{E}_b$ instead of averages over the training sets $\mathcal{S}_A$ and $\mathcal{S}_B$), $\mathbb{E}_{a} \| G(E^c(a),E^s_A(a),0) - a\|_1$ and $\mathbb{E}_{b} \| G(E^c(b),0,E^s_B(b)) - b\|_1$. In the theorem, we assume that these losses are being minimized by $E^c,E^s_A,E^s_B$ and $G$. In addition, the expected version of $\mathcal{L}_{adv}$ is $\sup_{d} \left\{\mathbb{E}_a l(d(E^c(a)),1) + \mathbb{E}_{b} l(d(E^c(b)),1)\right\}$ which is minimized by any encoder $E^c$ that provides $\mathbb{P}_{E^c(A)} = \mathbb{P}_{E^c(B)}$ (see Prop.~2 in~\cite{gan}), i.e., the distribution of $E^c(a)$ is equal to the distribution of $E^c(b)$. In Thm.~\ref{thm:disent}, we assume that $\mathbb{P}_{E^c(A)} = \mathbb{P}_{E^c(B)}$ which implies that the adversarial loss is minimized as well. We note that in this analysis the zero-losses are not a requirement. It is also depicted in our ablation study that the zero-losses are not a requirement but slightly improve the results. 

The consequences of the theorem are: (i) the encodings $E^c(a)$ and $E^s_A(a)$ are (almost) independent, (ii) $E^c(a)$ is a function of $e^c(a)$ and (iii) $E^c(a)$ holds most of information in $e^c(a)$. The second and third consequences provide that $E^c(a)$ and $e^c(a)$ encode the same information. We note that, given these consequences, we could also claim that $E^s_A(a)$ and $e^s_A(a)$ hold the same information. Therefore, we conclude that under the proposed assumptions, the learned encodings $E^c(a)$ and $E^s_A(a)$ capture the same information as $e^c(a)$ and $e^s_A(a)$ (resp.).

Finally, for clarity, we note that by symmetric arguments, we could arrive at the same conclusions for $E^c(b)$ and $E^s_B(b)$.

\section{Proof of Thm.~\ref{thm:disent}}

\begin{proof}[Proof of Thm.~\ref{thm:disent}]
First, we consider that by $I(X;Y) = H(X) + H(Y) - H(X,Y)$, we have:
\begin{equation}\label{eq:first}
\begin{aligned}
I(E^c(a);E^s_A(a)) = & H(E^c(a)) + H(E^s_A(a)) \\
&- H(E^c(a),E^s_A(a))
\end{aligned}
\end{equation}
Since $\mathbb{E}_{a} \| G(E^c(a),E^s_A(a),0) - a\|_1 = 0$, we have:
\begin{equation}\label{eq:press}
\begin{aligned}
&I(G(E^c(a),E^s_A(a),0);a) = I(a;a) = H(a)
\end{aligned}
\end{equation}
Next, by the data processing inequality, we have: $I(X;g(f(X))) \leq I(X;f(X))$. Therefore, by selecting $g(\cdot) := G(\cdot,0)$ and $g(\cdot) := (E^c(\cdot),E^s_A(\cdot))$ and $X := a$, we have:
\begin{equation}\label{eq:f}
\begin{aligned}
H(a) &= I(G(E^c(a),E^s_A(a),0);a) \\
&\leq I(E^c(a),E^s_A(a);a)
\end{aligned}
\end{equation}
Since $a = g(e^c(a),e^s_A(a),0)$, where $e^c(a)$ and $e^s_A(a)$ are assumed to be independent (see Sec.~2) and $g$ to be is invertible, we have:
\begin{equation}\label{eq:H}
\begin{aligned}
H(a) &= H(g(e^c(a),e^s_A(a),0)) \\
&= H(e^c(a),e^s_A(a)) \\
&= H(e^c(a)) + H(e^s_A(a)) \\
\end{aligned}
\end{equation}
We assumed that the representations $g(e^c(a),e^s_A(a),0)$ and $g(e^c(b),0,e^s_B(b))$ form an intersection between $a$ and $b$. In addition, $G(E^c(a),E^s_A(a),0) \sim \mathbb{P}_A$, $G(E^c(b),0,E^s_B(b)) \sim \mathbb{P}_B$ and $E^c(a) \sim E^c(b)$ (since we assumed that $\mathbb{P}_{E^c(A)} = \mathbb{P}_{E^c(B)}$). Therefore, for $G := \hat{g}$, $\hat{e}^c := E^c$, $\hat{e}^s_A := E^s_A$ and $\hat{e}^s_B := E^s_B$, by Def.~1:
\begin{equation}\label{eq:inter}
H(E^c(a)) \leq H(e^c(a)) \\
\end{equation}
By Eq.~\ref{eq:infoflow}, we have:
\begin{equation}\label{eq:twoeps}
\begin{aligned}
H(E^s_A(a)) - \epsilon &\leq H(e^s_A(a)) \\
\end{aligned}
\end{equation}
By combining Eqs.~\ref{eq:f},~\ref{eq:H},~\ref{eq:inter} and~\ref{eq:twoeps}, we have:
\begin{equation}
\begin{aligned}
H(E^c(a),E^s_A(a)) &\geq H(a)\\
&= H(e^c(a)) + H(e^s_A(a))\\
&\geq H(E^c(a)) + H(E^s_A(a)) - \epsilon \\
\end{aligned}
\end{equation}
By combining the last inequality with Eq.~\ref{eq:first}, we have:
\begin{equation}
\begin{aligned}
I(E^c(a);E^s_A(a)) \leq \epsilon
\end{aligned}
\end{equation}
Next, we define $\hat{e}^c(a) := (e^c(a),E^c(a))$, $\hat{e}^s_A(a) := (e^s_A(a),E^s_A(a))$, $\hat{e}^s_B(b) := (e^s_B(b),E^s_B(b))$ and $g'$, such that, $g'(\hat{e}^c(a),\hat{e}^s_A(a),0) = g(e^c(a),e^s_A(a))$ and $g'(\hat{e}^c(b),0,\hat{e}^s_B(b)) = g(e^c(b),e^s_B(b))$. Since $g$ is invertible for both domains, we conclude that $g'$ is invertible as well. Therefore, by Def.~1, we conclude that $H(\hat{e}^c(a)) \leq H(e^c(a))$. But, $\hat{e}^c(a) = (e^c(a),E^c(a))$ and, therefore, we also have: $H(\hat{e}^c(a)) \geq H(e^c(a))$. In particular, $H(\hat{e}^c(a)) = H(e^c(a))$. We conclude that:
\begin{equation}
\begin{aligned}
I(e^c(a);E^c(a)) =& H(e^c(a)) + H(E^c(a)) \\
&- H(e^c(a),E^c(a)) \\
=& H(E^c(a))
\end{aligned}
\end{equation}
Therefore, $E^c(a)$ is a function of $e^c(a)$. Finally, we consider that:
\begin{equation}
\begin{aligned}
&H(E^c(a)) + H(e^s_A(a)) + \epsilon \\
\geq& H(E^c(a)) + H(E^s_A(a)) \\
\geq& H(a) \\
=& H(e^c(a)) + H(e^s_A(a))
\end{aligned}
\end{equation}
In particular, $H(E^c(a)) \geq H(e^c(a)) - \epsilon$.
\end{proof}

\section{Ablation Study Visual Results}

In order to compare the effect of the different loss visually, we provide in Fig.~\ref{fig:no_recon}, \ref{fig:no_adv} and \ref{fig:no_zero} the translation from smiling persons to persons with glasses, when each of the losses is removed. With no reconstruction loss the method is unable to create realistic face images, as the $G$ is not affected by any of the losses remaining. With no adversarial loss the method is unable to add the glasses (separate part of domain $B$) to the given image. Without the zero-loss, results are only slightly worse numerically, and this is not observed visually. 

\begin{figure}
\centering
  \includegraphics[width=0.95\linewidth, clip]{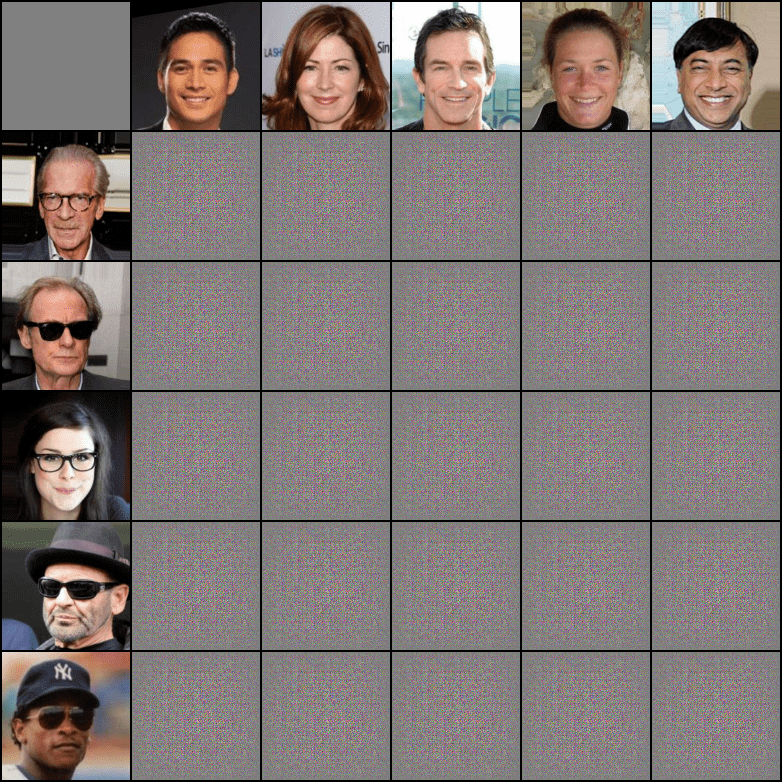}\\
   \caption{Translation from the domain of smiling persons to the domain of persons with glasses, when the reconstruction loss is removed}
  \label{fig:no_recon}
\end{figure}

\begin{figure}
\centering
  \includegraphics[width=0.95\linewidth, clip]{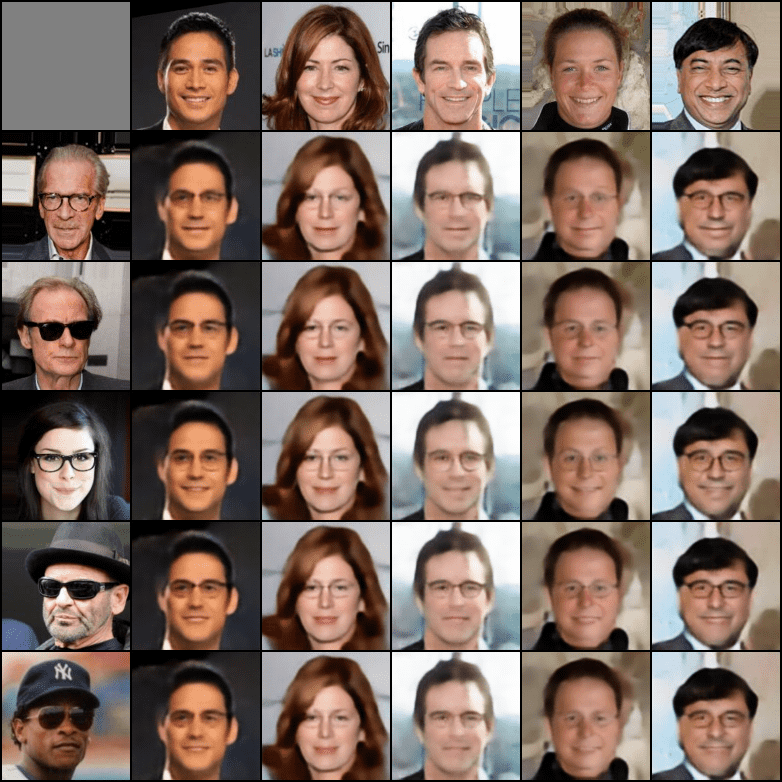}\\
   \caption{Translation from the domain of smiling persons to the domain of persons with glasses, when the adversarial loss is removed}
  \label{fig:no_adv}
\end{figure}

\begin{figure}
\centering
  \includegraphics[width=0.95\linewidth, clip]{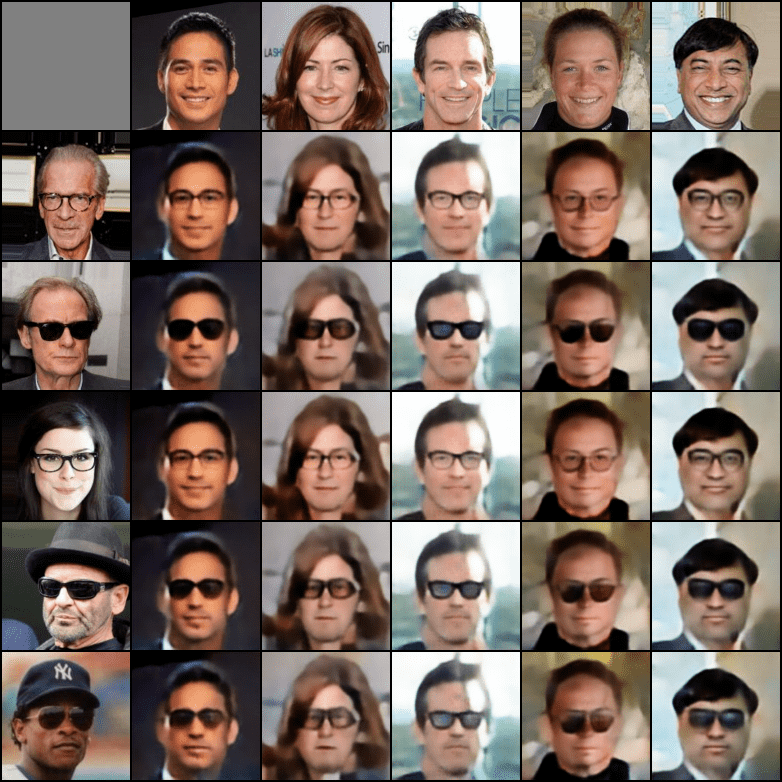}\\
   \caption{Translation from the domain of smiling persons to the domain of persons with glasses, when the zero loss is removed}
  \label{fig:no_zero}
\end{figure}

\section{Visual Comparison to Baseline Methods}

In additional to the numerical comparison in tables 1 and 2 of the main report, we provide a visual comparison in Fig.~\ref{fig:fader}, \ref{fig:drit} and \ref{fig:munit}. For MUNIT and DRIT, the method is unable to change content in the source image, and so the smile (separate part of domain $A$) remains, and no glasses (separate part of domain $B$) are added. For Fader Networks, a generic glasses are added, and not the one specific to the image in domain $B$. 

\begin{figure}
\centering
  \includegraphics[width=0.95\linewidth, clip]{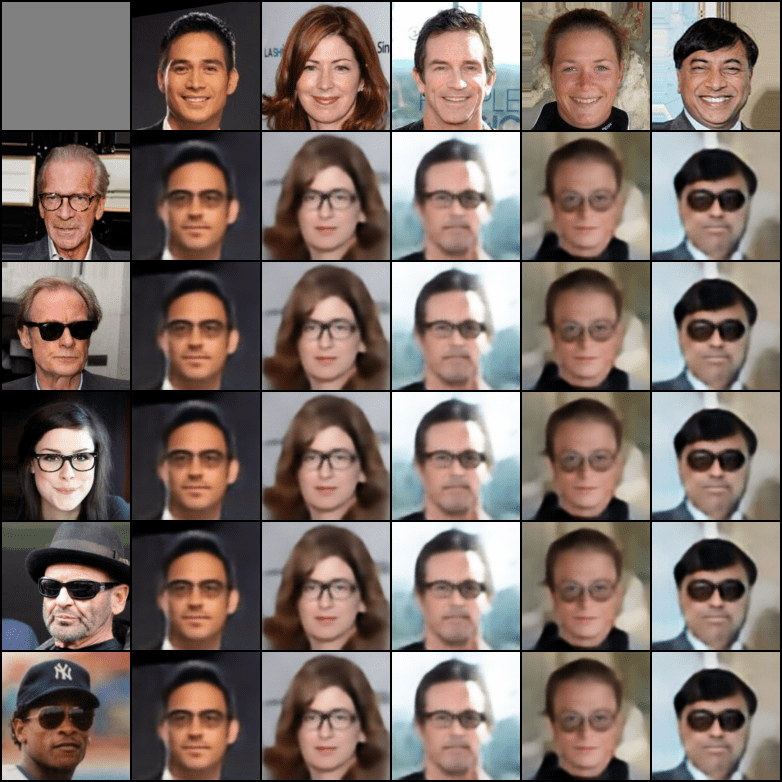}\\
   \caption{Translation from the domain of smiling persons to the domain of persons with glasses, using the Fader Networks method.}
  \label{fig:fader}
\end{figure}

\begin{figure}
\centering
  \includegraphics[width=0.95\linewidth, clip]{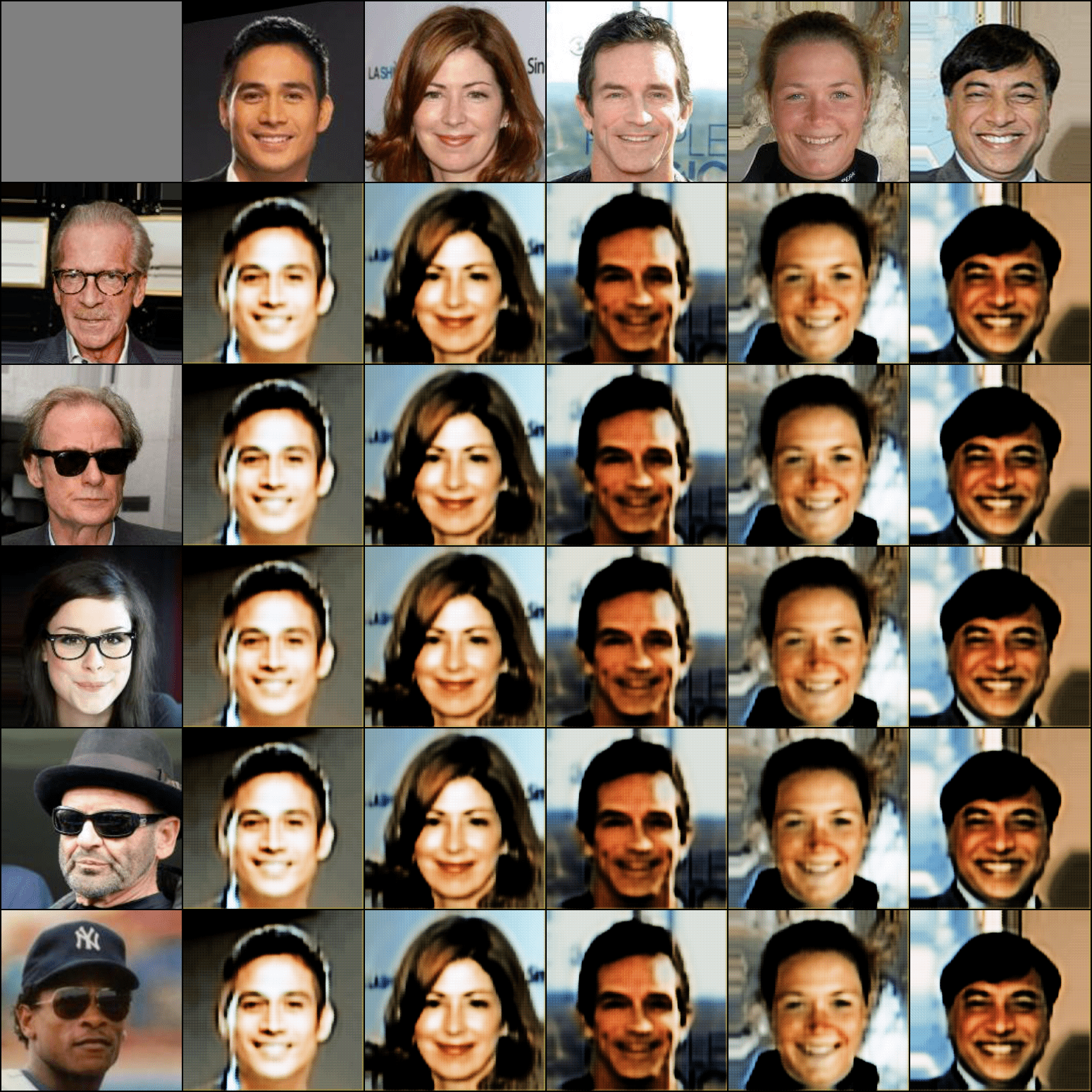}\\
   \caption{Translation from the domain of smiling persons to the domain of persons with glasses, using the DRIT method.}
  \label{fig:drit}
\end{figure}

\begin{figure}
\centering
  \includegraphics[width=0.95\linewidth, clip]{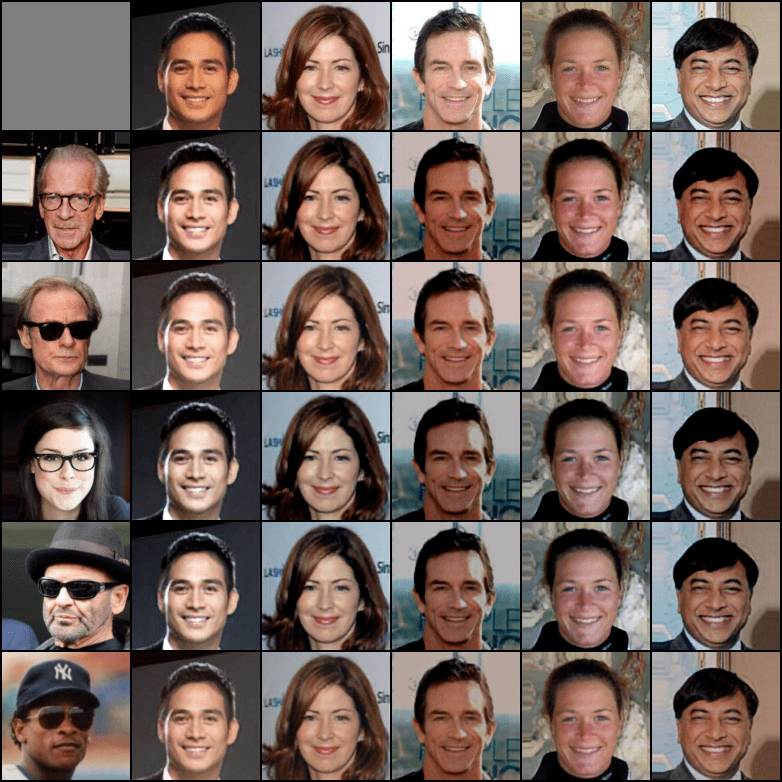}\\
   \caption{Translation from the domain of smiling persons to the domain of persons with glasses, using the MUNIT method.}
  \label{fig:munit}
\end{figure}



\end{document}